\documentclass[final]{alt2026} 

\title[Distributional Soft Bellman Operator under the Cram\'er Geometry]{Distributional Soft Bellman Operator under the Cram\'er Geometry}
\usepackage{times}
\usepackage{amsmath, amssymb, bm, mathtools}
\usepackage{bbm}
\usepackage{mathrsfs}

\usepackage{graphicx}
\usepackage{booktabs}
\usepackage{array}
\usepackage{caption}
\usepackage{subcaption}

\usepackage{enumitem}
\usepackage{xcolor}
\usepackage{hyperref}
\hypersetup{
    colorlinks=true,
    linkcolor=blue,
    citecolor=blue,
    urlcolor=blue
}

\usepackage[a4paper,margin=1in]{geometry}

\newtheorem{assumption}[theorem]{Assumption}

\altauthor{\Name{Keru Wang}\textsuperscript{1} \Email{keru.wang@ucdconnect.ie} \\
       \Name{Yixin Deng}\textsuperscript{1} \Email {yixin.deng@ucdconnect.ie} \\
       \Name {Yao Lyu}\textsuperscript{2} \Email {lvyao@mail.tsinghua.edu.cn} \\
       \Name {Stephen Redmond}\textsuperscript{1}\thanks{Corresponding authors.} \Email {stephen.redmond@ucd.ie} \\
       \Name {Shengbo Eben Li}\textsuperscript{2,3}\footnotemark[1] \Email {lishbo@tsinghua.edu.cn} \\
       \addr \textsuperscript{1}School of Electrical and Electronic Engineering, University College Dublin, Dublin, Ireland\\
       \addr \textsuperscript{2}School of Vehicle and Mobility, Tsinghua University, Beijing, China\\
       \addr \textsuperscript{3}College of Artificial Intelligence, Tsinghua University, Beijing, China}

\makeatletter
\renewcommand*{\@jmlrproceedings}{}
\renewcommand*{\@jmlrpages}{}
\makeatother
\begin{document}

\maketitle

\begin{abstract}%
  Distributional soft policy iteration (DSPI) provides an important framework for combining distributional reinforcement learning (DRL) with maximum-entropy control, in which the policy evaluation step is governed by a distributional soft Bellman operator acting on entropy-regularised returns. Theoretical analysis of such an evaluation step requires a probability metric under which Bellman updates can be controlled, typically by showing that the operator contracts the distance between any two candidate return-distribution estimates. In this paper, we focus on the Cram\'er geometry, a cumulative distribution function (CDF)-based metric with an $L^2$ structure, and study whether the fixed-policy distributional soft Bellman operator has this contraction property and hence a unique fixed point under this metric. Working directly on an admissible CDF field domain, we formulate the CDF-level distributional soft Bellman operator, prove that it is a $\sqrt{\gamma}$-contraction, and obtain the corresponding unique fixed point together with convergent iterative policy evaluation. The CDF formulation also shows that this finite-Cram\'er-domain property follows from a uniform first-moment condition on the combined one-step reward entropy shift, rather than from separate uniform boundedness assumptions on the reward and entropy terms. We then transport the same evaluation problem to the spectral domain by conjugation, obtaining an equivalent Hilbert-space representation of the same decision process. Taken together, these results identify the Cram\'er-geometric Bellman fixed point associated with the policy-evaluation step of DSPI, providing a reference point for studying approximate critics, evaluation error, and critic-loss design in DSPI-style algorithms.
\end{abstract}

\begin{keywords}%
  Distributional Reinforcement Learning; Distributional Soft Bellman Operator; Cram\'er Metric; Spectral Realisation; Distributional Soft Policy Iteration.
\end{keywords}

\section{Introduction}
Distributional reinforcement learning (DRL) \citep{c51,drlbook_bellemare_distributional_2023} studies Bellman updates at the level of return distributions rather than only expectations. In the policy evaluation setting \citep{c51,drlbook_bellemare_distributional_2023,c51theory}, contraction-based analysis depends on the probability metric used to compare return distributions \citep{c51theory,cramer-analysis,drlstat}. In this work, we focus on the Cram\'er metric \citep{cramer1,cramer2,cramer-analysis}, a cumulative distribution function (CDF)-based metric under which distances between return distributions are expressed as \(L^2\) distances between CDFs. Recent work \citep{our1st} has further shown that, for standard, non-soft distributional policy evaluation under the Cram\'er geometry, the same evaluation problem admits an equivalent spectral Hilbert-space representation. Thus, under the Cram\'er geometry, standard distributional policy evaluation can be understood equivalently at the distributional, CDF, and spectral levels.

A parallel line of work studies maximum-entropy reinforcement learning (RL) \citep{todorov07,todorov09,toussaint09,rawlik12,neu17,geist19,haarnoja17,maxentrl1,sac1,sac2}, where entropy regularisation modifies the Bellman recursion and leads to the soft Bellman framework underlying soft policy iteration. Combining this viewpoint with DRL yields distributional soft policy iteration (DSPI) and related algorithmic formulations \citep{dsacDuan_2022,dsacma,dsactDuan_2025}, in which one evaluates entropy-regularised return distributions rather than only their expectations. In some practical DSPI or distributional soft actor-critic (DSAC) style algorithms, policy evaluation is implemented by fitting a parameterised distributional critic to a bootstrapped Bellman target distribution, often through a tractable distribution-matching loss such as a KL divergence. This choice is algorithmically convenient, especially for parametric critic families where the loss can be estimated or optimised directly from samples. However, tractability of the critic loss does not by itself imply that the associated Bellman evaluation step is non-expansive or contractive under the same geometry, so errors in the bootstrapped target may be propagated or amplified rather than systematically reduced by repeated critic updates. Although such methods have achieved strong empirical performance, DSAC has also been reported to suffer from instability in challenging high-dimensional control tasks \citep{dsacDuan_2022}; while such behaviour may have multiple algorithmic causes, the absence of a contraction guarantee for the underlying bootstrapped evaluation step is a natural theoretical issue to examine. Subsequent refinements, such as DSAC-T, introduce more sophisticated training mechanisms to improve empirical stability \citep{dsactDuan_2025}; however, these refinements do not by themselves guarantee that the underlying distributional soft Bellman evaluation step is contractive. Without such an evaluation-level guarantee, it is difficult to explain theoretically why iterative critic updates should converge to the policy's entropy-regularised return distribution, which is a prerequisite for analysing the subsequent policy improvement step. The Cram\'er geometry is a useful candidate for analysing the Bellman evaluation step because its CDF-$L^2$ structure turns the distributional Bellman update into a shift-and-rescale operation on CDFs, for which the discount rescaling yields a direct contraction estimate, while Its connection with the energy distance also suggests a practical sample-based way to estimate Cram\'er-type critic losses, avoiding the need to compute the Cram\'er distance through an explicit CDF integral and making such losses more readily implementable in DRL algorithms. Motivated by these considerations, this paper takes a fixed-policy evaluation perspective and studies whether the distributional soft Bellman operator admits a contraction and fixed-point theory under the Cram\'er geometry.

This paper studies the policy evaluation component of DSPI under the Cram\'er geometry. At the CDF level, the soft Bellman update can be written as a shifted and rescaled operator on CDF fields, while the Cram\'er metric is measured by the \(L^2\) distance between those CDFs. This metric-compatible formulation yields a direct proof of the \(\sqrt{\gamma}\)-contraction. It also makes explicit the condition needed for the entropy-regularised Bellman target to remain in the finite-Cram\'er-distance domain: a uniform first-moment condition on the combined one-step soft shift, rather than separate uniform boundedness assumptions on the reward and entropy terms \citep{geist19,sac1,dsacDuan_2022}. Building on this CDF-level formulation, we then transport the same evaluation problem to the spectral domain by conjugation \citep{operator-conjugate}, obtaining an equivalent Hilbert-space representation of the same dynamics.

Our contributions are as follows.
\begin{itemize}[leftmargin=2em]
    \item We formulate the distributional soft Bellman operator directly at the CDF level on an admissible CDF field domain associated with the Cram\'er geometry, and show that this formulation is consistent with the underlying distribution-level soft Bellman recursion.

    \item We prove a \(\sqrt{\gamma}\)-contraction and the resulting fixed-point and convergence guarantees for the CDF-level distributional soft Bellman operator, with domain preservation ensured by a first-moment condition on the combined one-step reward-entropy shift rather than by separate uniform boundedness assumptions on the reward and entropy terms \citep{geist19,sac1,dsacDuan_2022}.

    \item We transport the same evaluation problem to the spectral domain by conjugation, obtaining an exact Hilbert-space realisation of the CDF-level dynamics. This provides an equivalent spectral representation of the same soft evaluation problem and identifies the exact Cram\'er-geometric Bellman target that the policy-evaluation step of DSPI would attempt to estimate.
\end{itemize}

The remainder of the paper is organised as follows. Section~\ref{sec:related-work} positions the paper relative to the relevant literature. Section~\ref{sec:preliminaries} introduces the probabilistic setup, the CDF field domain, and the spectral transport used later. Section~\ref{sec:cdf-soft-evaluation} develops the CDF-level formulation and soft policy evaluation theory of the distributional soft Bellman operator under the Cram\'er geometry. Section~\ref{sec:spectral-soft} gives the spectral realisation of the same dynamics. Section~\ref{sec:dspi-perspective} explains how the resulting evaluation theory fits into the broader framework of DSPI. Section~\ref{sec:discussion} discusses the scope of the analysis and possible extensions to projected or approximate soft distributional critics, and Section~\ref{sec:conclusion} concludes.

\section{Related work}
\label{sec:related-work}
In DRL, early work already studied return distributions through non-parametric and parametric return-density estimation \citep{returncdf,morimura2012parametric}. Later work developed both practical approximations \citep{drllinear,mmddrl2} and algorithms \citep{qrdqn-Dabney18,iqn,fqf,dpo,d4pg,mmddrl,splinedrl,sketchdrl} for distributional value learning, together with theoretical analyses \citep{cramer-analysis,drlstat} under several probability metrics (for comparing different probability laws), including Wasserstein distances \citep{wasserstein1,lossdrl} and CDF-based \(L^p\) metrics \citep{lpmetirc}. Among these, the Cram\'er metric is the distinguished \(L^2\) case and is naturally expressed at the level of cumulative distribution functions. Recent work \citep{our1st} has further emphasised the CDF domain itself as the intrinsic analytical setting for Bellman dynamics under the Cram\'er geometry, and established an exact spectral realisation of the corresponding evaluation problem in the non-soft setting.

In maximum-entropy RL, the optimisation objective is augmented by policy entropy, with roots in entropy-regularised control and inference-based formulations \citep{maxentrl2,todorov07,todorov09,toussaint09,rawlik12}. This line of work yields the soft Bellman framework \citep{haarnoja17,neu17,geist19,sac1,sac2}, and underlies modern deep maximum-entropy methods for continuous control \citep{maxentrl1,sac1,sac2}. Existing analyses in this setting, however, are developed mainly at the scalar level of soft value or action-value functions, where the focus is on the expectation of the entropy-regularised return rather than on its full distribution.

These two lines of work meet in distributional formulations of soft policy iteration and related algorithm design \citep{dsacDuan_2022,dsacma,dsactDuan_2025}, where one models entropy-regularised returns rather than only their expectations. This leads to the fixed-policy evaluation problem for the distributional soft Bellman operator. Under the Cram\'er metric, distances between return distributions are \(L^2\) distances between their CDFs. When the distributional soft Bellman update is written at this CDF level, the discount factor acts by rescaling the CDF argument, while the reward and entropy terms enter as a combined random shift. The former gives the \(\sqrt{\gamma}\)-contraction estimate, and the latter determines the condition needed for the update to remain in the finite-Cram\'er-distance domain. Moreover, at this level of analysis, this domain condition can be stated as a first-moment bound on the combined one-step soft shift, rather than separate uniform boundedness assumptions on the reward and entropy terms often used in maximum entropy RL analyses \citep{geist19,sac1,dsacDuan_2022}. The present paper develops this CDF-level and spectral evaluation theory, and thereby identifies the exact Cram\'er-geometric Bellman fixed point associated with the policy evaluation step of DSPI.

\section{Preliminaries and analytical setup}
\label{sec:preliminaries}
This section introduces the probabilistic objects used in soft policy evaluation, specifies the CDF field domain used in the subsequent analysis, and recalls the CDF-to-spectral transport \citep{our1st} needed later.

\subsection{Basic setup for soft policy evaluation}
\label{subsec:basic-notation}
Consider a discounted Markov decision process (MDP) \citep{rlbook,Suttonbook} \(\langle \mathcal S,\mathcal A,\mathcal P,\mathcal R,\gamma\rangle\), where \(\mathcal S\) and \(\mathcal A\) are the state and action spaces, \(\mathcal P(\cdot\mid s,a)\) is the transition kernel on \(\mathcal S\) given \((s,a)\in\mathcal S\times\mathcal A\), \(\mathcal R:\mathcal S\times\mathcal A\times\mathcal S\to\mathbb R\) is the reward function, and \(\gamma\in(0,1)\) is the discount factor. A trajectory is a stochastic process \(\{S_t,A_t,R_t\}_{t\ge0}\) such that $S_{t+1}\sim\mathcal P(\cdot\mid S_t,A_t),$ $R_t=\mathcal R(S_t,A_t,S_{t+1}).$

Throughout the paper, \(\pi\) denotes a fixed stationary policy, where \(\pi(\cdot\mid s)\) is a probability distribution on \(\mathcal A\) for each state \(s\in\mathcal S\). For each \((s,a)\in\mathcal S\times\mathcal A\), let \(s'\) denote the next state sampled from \(\mathcal P(\cdot\mid s,a)\), and then let \(a'\) denote the next action, if $a'\sim \pi(\cdot\mid s')$, we write \((s',a')\sim\mathcal P^\pi(\cdot\mid s,a)\) for this joint sampling scheme. The corresponding one-step reward random variable is denoted by \(R(s,a)\), that is, \(R(s,a)\) has the distribution induced by \(\mathcal R(s,a,s')\) when the next state \(s'\) is generated from \(\mathcal P(\cdot\mid s,a)\).

In the maximum-entropy setting, the return is augmented by discounted entropy terms. For \(t\ge 0\), define the soft return by
\begin{equation}
Z_t^\pi
:=
R_{t} +
\sum_{k=1}^{\infty}\gamma^k [R_{t+k}-\alpha \log \pi(A_{t+k}\mid S_{t+k})],
\label{eq:soft_return_process}
\end{equation}
where \(\alpha>0\) is the entropy-regularisation coefficient. For \((s,a)\in\mathcal S\times\mathcal A\), we then define
\begin{equation}
Z^\pi(s,a)
:=
Z_t^\pi \ \text{conditioned on}\ (S_t,A_t)=(s,a).
\label{eq:soft_return_definition}
\end{equation}
Under the time-homogeneous dynamics induced by the stationary policy \(\pi\), the conditional law above does not depend on the choice of \(t\), so the notation \(Z^\pi(s,a)\) is well defined. By separating the first reward term in \eqref{eq:soft_return_process}, one obtains the Bellman equation
\begin{equation}
Z^\pi(s,a)=
R(s,a)+\gamma\bigl(Z^\pi(s',a')-\alpha\log\pi(a'\mid s')\bigr),
\qquad
(s',a')\sim\mathcal P^\pi(\cdot\mid s,a).
\label{eq:rv_bellman_soft}
\end{equation}

A return-distribution field \citep{c51,drlbook_bellemare_distributional_2023} is a mapping \(\mathcal Z:\mathcal S\times\mathcal A\to\mathscr P(\mathbb R)\), where \(\mathscr P(\mathbb R)\) denotes the set of probability laws on \(\mathbb R\). Thus, for each \((s,a)\in\mathcal S\times\mathcal A\), \(\mathcal Z(s,a)\) is a probability law on \(\mathbb R\). The distributional soft Bellman operator \(\mathcal T_{DS}^\pi\) acts on return-distribution fields by
\begin{equation}
(\mathcal T_{DS}^\pi\mathcal Z)(s,a)
:=
\mathrm{Law}\!\left(
R(s,a)+\gamma\bigl(Z'-\alpha\log\pi(a'\mid s')\bigr)
\right),
\label{eq:dist_soft_bellman_operator}
\end{equation}
where \((s',a')\sim\mathcal P^\pi(\cdot\mid s,a)\) and, conditionally on \((s',a')\), the random variable \(Z'\) has law \(\mathcal Z(s',a')\), i.e. $\mathrm{Law}(Z')=\mathcal Z(s',a')$, equivalently $Z'\sim \mathcal Z(s',a')$. Accordingly, the family of soft return random variables \(\{Z^\pi(s,a)\}_{(s,a)\in\mathcal S\times\mathcal A}\) induces the return-distribution field \(\mathcal Z^\pi:\mathcal S\times\mathcal A\to\mathscr P(\mathbb R)\), where \(\mathcal Z^\pi(s,a):=\mathrm{Law}\bigl(Z^\pi(s,a)\bigr)\). Taking laws on both sides of \eqref{eq:rv_bellman_soft} yields the fixed-point relation \(\mathcal Z^\pi=\mathcal T_{DS}^\pi \mathcal Z^\pi\).

When the first moment is finite, the soft return also induces the corresponding soft action-value function $Q^\pi(s,a):=\mathbb E\bigl[Z^\pi(s,a)\bigr].$ The corresponding expectation-level soft Bellman operator \(\mathcal T^\pi_S\) acts on action-value functions \(Q:\mathcal S\times\mathcal A\to\mathbb R\) by
\begin{equation}
(\mathcal T^\pi_S Q)(s,a)
:=
\mathbb E_{r\sim R(s,a),\,(s',a')\sim\mathcal P^\pi(\cdot\mid s,a)}
\Bigl[
r+\gamma\bigl(Q(s',a')-\alpha\log\pi(a'\mid s')\bigr)
\Bigr].
\label{eq:scalar-soft-bellman-operator}
\end{equation}
By taking expectations in \eqref{eq:rv_bellman_soft}, one likewise obtains the fixed-point relation $Q^\pi=\mathcal T^\pi_S Q^\pi.$

\subsection{CDF field domain}
\label{subsec:cdf-field-domain}
We now pass from return-distribution fields to their CDF representations, since the Cram\'er geometry is expressed directly in terms of CDF. For a probability law \(P\) on \(\mathbb R\), let \(F_P\) denote its CDF \citep{prob_intro}. The Cram\'er distance between probability laws \(P_1\) and \(P_2\) is $d_C(P_1,P_2)
:=
\left(
\int_{\mathbb R}\bigl(F_{P_1}(x)-F_{P_2}(x)\bigr)^2\,dx
\right)^{1/2}.$ Let \(\delta_0\) denote the Dirac law at \(0\). We define the finite-Cram\'er CDF domain by
\begin{equation}
\Gamma_F
:=
\left\{
F_P:
F_P \text{ is a CDF and } d_C(P,\delta_0)<\infty
\right\}.
\label{eq:GammaF_def_softpaper}
\end{equation}
Thus \(\Gamma_F\) consists of those CDFs whose underlying laws have finite Cram\'er distance to \(\delta_0\).

Given a return-distribution field \(\mathcal Z:\mathcal S\times\mathcal A\to\mathscr P(\mathbb R)\) such that \(F_{\mathcal Z(s,a)}\in\Gamma_F\) for every \((s,a)\in\mathcal S\times\mathcal A\), define its associated CDF field pointwise by $F_{\mathcal Z}(s,a)(x):=F_{\mathcal Z(s,a)}(x).$ The admissible CDF field domain is then
\begin{equation}
\mathcal X^{\mathrm{cdf}}
:=
\left\{
F_{\mathcal Z}\ \middle|\ 
F_{\mathcal Z(s,a)}\in\Gamma_F
\text{ for all }(s,a)\in\mathcal S\times\mathcal A,
\ \sup_{(s,a)\in\mathcal S\times\mathcal A}
d_C\bigl(\mathcal Z(s,a),\delta_0\bigr)<\infty
\right\}.
\label{eq:xcdf_def_softpaper}
\end{equation}
Thus \(\mathcal X^{\mathrm{cdf}}\) consists of state-action indexed CDF fields whose pointwise laws lie in the finite-Cram\'er class, with a uniform finite-Cram\'er bound over \(\mathcal S\times\mathcal A\).

For \(F_{\mathcal Z_1},F_{\mathcal Z_2}\in\mathcal X^{\mathrm{cdf}}\), define
\begin{equation}
d_{\mathrm{Cr}}(F_{\mathcal Z_1},F_{\mathcal Z_2})
:=
\sup_{(s,a)\in\mathcal S\times\mathcal A}
\left(
\int_{\mathbb R}
\bigl(F_{\mathcal Z_1}(s,a)(x)-F_{\mathcal Z_2}(s,a)(x)\bigr)^2\,dx
\right)^{1/2}.
\label{eq:cramer_metric_state_action_softpaper}
\end{equation}
This is the state-action supremum of the pointwise Cram\'er distances between the corresponding return laws.

For comparison with the non-soft setting, note that taking \(\alpha=0\) in \eqref{eq:soft_return_process}, \eqref{eq:rv_bellman_soft}, and \eqref{eq:dist_soft_bellman_operator} recovers the standard distributional Bellman recursion. The corresponding CDF-level update is then given by $\mathbb E_{r\sim R(s,a),\,(s',a')\sim\mathcal P^\pi(\cdot\mid s,a)}[F_{\mathcal Z}(s',a')(\frac{x-r}{\gamma})].$ The soft CDF-level operator introduced later is the entropy-regularised counterpart of this update.

\subsection{Spectral transport}
\label{subsec:spectral-transport}
We next introduce the spectral domain objects associated with the CDF field domain. The construction is based on the same finite-Cram\'er setting as above: raw CDFs are taken from \(\Gamma_F\), and CDF fields are taken from \(\mathcal X^{\mathrm{cdf}}\). Fix \(\epsilon>0\). Throughout this subsection, the notation \(\widehat{\cdot}\) denotes the symmetric unitary Fourier transform \citep{fourier,functional1} on \(\mathbb R\), namely $\widehat f(\omega) := \frac{1}{\sqrt{2\pi}} \int_{\mathbb R} f(x)e^{-i\omega x}\,dx.$

Let $F_{\delta_0}(x)=\mathbbm 1\{x\ge 0\}$ denote the CDF of the Dirac law at \(0\), and define the centring map $C(F):=F-F_{\delta_0}.$ Thus the spectral construction acts on centred CDFs rather than on raw CDFs directly. For the fixed \(\epsilon>0\), define 

the centred CDF Hilbert space $\mathcal H^{\mathrm{cdf}}_\epsilon
 :=
 \Bigl\{
 H \mid
 \|H\|^2_{\mathcal H^{\mathrm{cdf}}_\epsilon}
 =
 \int_{\mathbb R}
 \frac{\omega^2}{\omega^2+\epsilon}
 |\widehat H(\omega)|^2\,d\omega
 < \infty
 \Bigr\},$
and the regularised spectral Hilbert space $\mathcal H_{\epsilon}
 :=
 \Bigl\{
 f \mid
 \|f\|^2_{\mathcal H_{\epsilon}}
 =
 \frac{1}{2\pi}
 \int_{\mathbb R}
 \frac{|\widehat f(\omega)|^2}{\omega^2+\epsilon}\,d\omega
 < \infty
 \Bigr\}.$
For \(F\in\Gamma_F\), the centred CDF \(C(F)\) belongs to \(\mathcal H^{\mathrm{cdf}}_\epsilon\), this follows from the corresponding result in the construction in \citep{our1st}. Define the map $\mathcal U:\mathcal H^{\mathrm{cdf}}_\epsilon\to\mathcal H_\epsilon$ through its Fourier transform by
\begin{equation}
\widehat{(\mathcal U H)}(\omega)
:=
-\sqrt{2\pi}\,i\omega\,\overline{\widehat H(\omega)}.
\label{eq:U_formula_softpaper}
\end{equation}
Here \(\overline{\widehat H(\omega)}\) denotes the complex conjugate of \(\widehat H(\omega)\). This convention agrees with the standard Cram\'er spectral construction and sends centred CDFs to their associated spectral objects.

The raw-to-spectral map is then defined by $\mathcal V:=\mathcal U\circ C.$ Hence \(\mathcal V\) is defined on \(\Gamma_F\), maps \(\Gamma_F\) into \(\mathcal H_\epsilon\), and acts on a raw CDF by first centring it and then applying \(\mathcal U\). Its inverse is well defined on its range. Applying \(\mathcal V\) pointwise to CDF fields yields the lifted map
\begin{equation}
\bigl(\mathbb V F_{\mathcal Z}\bigr)(s,a):=\mathcal V\bigl(F_{\mathcal Z}(s,a)\bigr),
\qquad F_{\mathcal Z}\in\mathcal X^{\mathrm{cdf}}.
\label{eq:lifted_transport_softpaper}
\end{equation}

The corresponding spectral field domain is $\mathcal X^{\mathrm{spec}}
:=
\Bigl\{
\nu:\mathcal S\times\mathcal A \to \mathcal H_{\epsilon}
\;\mid\;
\|\nu\|_{\infty,\epsilon}<\infty
\Bigr\}$ for measurable $\nu$, equipped with the supremum norm $
\|\nu\|_{\infty,\epsilon}
:=
\sup_{(s,a)\in\mathcal S\times\mathcal A}
\|\nu(s,a)\|_{\mathcal H_{\epsilon}}.$
We then define the admissible spectral domain by
\begin{equation}
\mathcal X^{\mathrm{adm}}:=\mathrm{range}(\mathbb V)\subset \mathcal X^{\mathrm{spec}}.
\label{eq:xadm_def_softpaper}
\end{equation}
The inverse lifted map is well defined on \(\mathcal X^{\mathrm{adm}}\) by pointwise application of \(\mathcal V^{-1}\). Thus \(\mathbb V\) maps CDF fields in \(\mathcal X^{\mathrm{cdf}}\) into \(\mathcal X^{\mathrm{adm}}\), and \(\mathbb V^{-1}\) recovers the corresponding CDF field on this domain.

\section{CDF-level formulation and soft policy evaluation}
\label{sec:cdf-soft-evaluation}
We now develop the policy evaluation theory of the distributional soft Bellman operator on the CDF field domain. We begin by formulating its action directly at the CDF level, and then use this formulation to establish the main evaluation results under the Cram\'er metric, including well-definedness, contraction, fixed-point existence and uniqueness, and convergence of the associated policy-evaluation iterates.

\subsection{The distributional soft Bellman operator at the CDF level}
\label{subsec:soft-cdf}
We now formulate the distributional soft Bellman operator directly on the CDF field domain, starting from the distribution-level recursion \eqref{eq:dist_soft_bellman_operator}. Since the CDF-level update involves both an affine shift and averaging over successor state-action pairs, we first record the assumptions needed to ensure that the induced operator is well defined on \(\mathcal X^{\mathrm{cdf}}\).

\begin{assumption}[Soft policy admissibility]
\label{ass:soft-policy-admissibility}
For each \(s\in\mathcal S\), if \(a\sim\pi(\cdot\mid s)\), then \(\log\pi(a\mid s)\) is well defined almost surely. Moreover,
\begin{equation}
\label{eq:soft-shift-integrability-assumption}
\sup_{(s,a)\in\mathcal S\times\mathcal A}
\mathbb E_{r\sim R(s,a),\,(s',a')\sim \mathcal P^\pi(\cdot\mid s,a)}
\bigl[
|\,r-\gamma\alpha\log\pi(a'\mid s')\,|
\bigr]
<\infty.
\end{equation}
Finally, for every \(F_{\mathcal Z}\in\mathcal X^{\mathrm{cdf}}\) and \(x\in\mathbb R\), the map
\begin{equation}
\label{eq:soft-measurability-assumption}
(s,a)\mapsto
\mathbb E_{r\sim R(s,a),\,(s',a')\sim \mathcal P^\pi(\cdot\mid s,a)}
\Biggl[
F_{\mathcal Z}(s',a')
\Bigl(
\frac{x-r}{\gamma}
+
\alpha\log\pi(a'\mid s')
\Bigr)
\Biggr]
\end{equation}
is measurable.
\end{assumption}

The central quantitative part of Assumption~\ref{ass:soft-policy-admissibility} is the uniform first-moment condition in \eqref{eq:soft-shift-integrability-assumption}, which controls the one-step soft shift \(r-\gamma\alpha\log\pi(a'\mid s')\).

\begin{remark}[On the strength of Assumption~\ref{ass:soft-policy-admissibility}]
\label{rem:soft-shift-integrability-strength}
The requirement in \eqref{eq:soft-shift-integrability-assumption} controls the first absolute moment of the one-step soft shift \(r-\gamma\alpha\log\pi(a'\mid s')\). In particular, it is implied by stronger boundedness-style conditions used in related entropy-regularised RL analyses, such as uniformly bounded rewards together with uniformly bounded entropy terms, or finite-action assumptions ensuring bounded entropy-augmented rewards \citep{geist19,sac1,dsacDuan_2022}, but does not require either quantity to be almost surely bounded on its own. In this sense, Assumption~\ref{ass:soft-policy-admissibility} isolates the level of regularity actually needed for the CDF-level domain-preservation argument under the Cram\'er geometry.
\end{remark}

For the policy-induced soft return field itself, we additionally assume that its CDF representation lies in the admissible domain.

\begin{assumption}[Admissible soft return CDF field]
\label{assump:soft-xcdf-domain}
For the fixed policy \(\pi\), the policy-induced soft return-distribution field \(\mathcal Z^\pi\) satisfies $F_{\mathcal Z^\pi}\in\mathcal X^{\mathrm{cdf}}.$
\end{assumption}

Under Assumption~\ref{ass:soft-policy-admissibility}, the distribution-level soft Bellman recursion induces the following operator on CDF fields.

\begin{definition}[CDF-level distributional soft Bellman operator]
\label{def:cdf-soft-bellman-operator}
Suppose Assumption~\ref{ass:soft-policy-admissibility} holds. For \(F_{\mathcal Z}\in\mathcal X^{\mathrm{cdf}}\), define
\begin{equation}
\label{eq:soft-bellman-cdf-definition}
(\mathcal T^{\pi}_{\mathrm{cdf}}F_{\mathcal Z})(s,a)(x)
:=
\mathbb E_{r\sim R(s,a),\,(s',a')\sim\mathcal P^\pi(\cdot\mid s,a)}
\Biggl[
F_{\mathcal Z(s',a')}
\Bigl(
\frac{x-r}{\gamma}
+
\alpha\log\pi(a'\mid s')
\Bigr)
\Biggr].
\end{equation}
\end{definition}

Relative to the non-soft CDF-level Bellman update, the only change is the additional shift \(\alpha\log\pi(a'\mid s')\) in the successor argument.

We first verify that this operator preserves the CDF field domain.

\begin{proposition}[The CDF-level soft Bellman operator preserves the CDF field domain]
\label{prop:soft-bellman-self-map}
Suppose Assumption~\ref{ass:soft-policy-admissibility} holds. Then $\mathcal T^{\pi}_{\mathrm{cdf}}:\mathcal X^{\mathrm{cdf}}\to\mathcal X^{\mathrm{cdf}}.$
\end{proposition}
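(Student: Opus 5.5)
Fix \(F_{\mathcal Z}\in\mathcal X^{\mathrm{cdf}}\) and \((s,a)\). The plan has three steps: show that \(G:=(\mathcal T^{\pi}_{\mathrm{cdf}}F_{\mathcal Z})(s,a)\) is a genuine CDF, show that its law has finite Cram\'er distance to \(\delta_0\), and show that this distance is bounded uniformly in \((s,a)\). Measurability in \((s,a)\) for each \(x\) is supplied directly by \eqref{eq:soft-measurability-assumption}.

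\textbf{Step 1: \(G\) is a CDF.} Write \(U:=r-\gamma\alpha\log\pi(a'\mid s')\), so the argument of the successor CDF is \((x-U)/\gamma\).
\begin{itemize}
\item \(G\) is nondecreasing, since each integrand is nondecreasing in \(x\).
\item \(G\) is right-continuous and has limits \(0\) and \(1\) at \(\mp\infty\). This follows by dominated (bounded by \(1\)) convergence, because each \(F_{\mathcal Z(s',a')}\) has these properties and \(U\) is a.s. finite by Assumption~\ref{ass:soft-policy-admissibility}.
\item Equivalently, \(G\) is the CDF of the law \(\mathrm{Law}(U+\gamma Z')\) from \eqref{eq:dist_soft_bellman_operator}. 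This identification is the consistency claim and can be cited as motivation.
\end{itemize}

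\textbf{Step 2: an integrable bound on \(|G-F_{\delta_0}|\).} Write \(G(x)-F_{\delta_0}(x)=\mathbb E[F'((x-U)/\gamma)-F_{\delta_0}(x)]\) with \(F'=F_{\mathcal Z(s',a')}\), and split the integrand as
\[
\bigl[F'((x-U)/\gamma)-F_{\delta_0}((x-U)/\gamma)\bigr]+\bigl[F_{\delta_0}(x-U)-F_{\delta_0}(x)\bigr].
\]
Here we used \(F_{\delta_0}(y/\gamma)=F_{\delta_0}(y)\). Minkowski's inequality in \(L^2(dx)\), applied to an expectation (a Bochner or Jensen argument, \(\|\mathbb E[h]\|_{L^2}\le\mathbb E\|h\|_{L^2}\)), gives
\[
d_C(G,\delta_0)\le \mathbb E\Bigl[\sqrt{\gamma}\,d_C(\mathcal Z(s',a'),\delta_0)\Bigr]+\mathbb E\bigl[\|F_{\delta_0}(\cdot-U)-F_{\delta_0}\|_{L^2}\bigr].
\]
\begin{itemize}
\item The first term uses the change of variables \(y=(x-U)/\gamma\), which produces the factor \(\sqrt\gamma\). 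It is at most \(\sqrt\gamma\sup_{(s',a')}d_C(\mathcal Z(s',a'),\delta_0)<\infty\) by membership in \(\mathcal X^{\mathrm{cdf}}\).
\item The second term is \(\mathbb E[|U|^{1/2}]\), because the difference of indicators is supported on an interval of length \(|U|\).
\item By Jensen, \(\mathbb E[|U|^{1/2}]\le(\mathbb E|U|)^{1/2}\), which is bounded uniformly in \((s,a)\) by \eqref{eq:soft-shift-integrability-assumption}.
\end{itemize}
Taking the supremum over \((s,a)\) yields the uniform bound required in \eqref{eq:xcdf_def_softpaper}, so \(\mathcal T^{\pi}_{\mathrm{cdf}}F_{\mathcal Z}\in\mathcal X^{\mathrm{cdf}}\).

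\textbf{Main obstacle.} The delicate point is justifying the Minkowski step \(\|\mathbb E h\|_{L^2}\le\mathbb E\|h\|_{L^2}\) for a random function in \(x\). I would avoid Bochner integrability by a duality argument. For any \(g\in L^2\) with \(\|g\|_{L^2}\le1\), use Fubini or Tonelli and Cauchy--Schwarz: \(\int g\,\mathbb E[h]\,dx=\mathbb E\int gh\,dx\le\mathbb E\|h\|_{L^2}\). Fubini is legitimate here because \(\mathbb E\int|g||h|\,dx\le\mathbb E\|h\|_{L^2}<\infty\) by Step 2. Joint measurability of \((x,\omega)\mapsto h\) also needs checking; it follows from monotonicity and right-continuity in \(x\).
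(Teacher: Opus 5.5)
Your proof is correct. It uses the same decomposition as the paper: insert $\mathbbm 1\{x\ge U\}$, rescale the first piece by $y=(x-U)/\gamma$, and note that the indicator difference has squared $L^2$ mass $|U|$. The difference is in how you pull the expectation out of the Cram\'er integral.

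The paper's route is as follows.
\begin{itemize}
\item It applies Jensen pointwise in $x$, $(\mathbb E h(x))^2\le\mathbb E h(x)^2$.
\item It then uses Tonelli on a nonnegative integrand and $(a+b)^2\le 2a^2+2b^2$.
\item This gives $d_C^2(G,\delta_0)\le 2\gamma\sup_{(u,v)}d_C^2(\mathcal Z(u,v),\delta_0)+2\,\mathbb E|U|$.
\end{itemize}
That route is more elementary: it needs no signed Fubini and no duality, and it mirrors the contraction proof.

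Your Minkowski route needs the extra justification you supply. Since $|G-F_{\delta_0}|\le 1$, testing against normalised truncations of $G-F_{\delta_0}$ itself settles the duality step without any a priori $L^2$ membership. In return, your route gains the following.
\begin{itemize}
\item It bounds the distance itself rather than its square.
\item It drops the factor $2$.
\item More substantively, it only uses the half-moment $\mathbb E|U|^{1/2}$.
\end{itemize}

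Your closing step $\mathbb E|U|^{1/2}\le(\mathbb E|U|)^{1/2}$ is therefore only there to match the stated hypothesis. Your argument actually proves domain preservation under the weaker condition $\sup_{(s,a)}\mathbb E\bigl[|r-\gamma\alpha\log\pi(a'\mid s')|^{1/2}\bigr]<\infty$.

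The paper's average-the-square route inherently produces $\mathbb E|U|$. For example, with $\mathcal Z\equiv\delta_0$ one gets $\mathbb E\int h^2\,dx=\mathbb E|U|$ exactly. Remark~\ref{rem:soft-shift-integrability-strength} presents the first-moment condition as the regularity actually needed, but your argument shows it is not sharp for this step.
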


\begin{proof}[Proof sketch]
For each \(F_{\mathcal Z}\in\mathcal X^{\mathrm{cdf}}\), the map in \eqref{eq:soft-bellman-cdf-definition} is obtained by averaging successor CDFs after an affine change of variable with positive slope \(1/\gamma\). It therefore remains a CDF. The uniform finite-Cram\'er bound in \(\mathcal X^{\mathrm{cdf}}\), together with Assumption~\ref{ass:soft-policy-admissibility}, yields preservation of finite Cram\'er distance to \(\delta_0\), while the required measurability is included in the same assumption. Hence \(\mathcal T^{\pi}_{\mathrm{cdf}}F_{\mathcal Z}\in\mathcal X^{\mathrm{cdf}}\). A full proof is given in Appendix~\ref{app:proof-soft-self-map}.
\end{proof}

We next relate \(\mathcal T^{\pi}_{\mathrm{cdf}}\) to the distribution-level soft Bellman operator.

\begin{proposition}[Consistency with the distribution-level soft Bellman operator]
\label{prop:soft-cdf-consistency}
Let \(\mathcal Z\) be a return-distribution field such that \(F_{\mathcal Z}\in\mathcal X^{\mathrm{cdf}}\). Then $F_{\mathcal T^\pi_{DS}\mathcal Z}=\mathcal T^{\pi}_{\mathrm{cdf}}F_{\mathcal Z}.$
\end{proposition}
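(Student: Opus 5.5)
The plan is to compute the CDF of \((\mathcal T^\pi_{DS}\mathcal Z)(s,a)\) pointwise in \(x\) by conditioning on the one-step randomness, and to check that the result coincides with \eqref{eq:soft-bellman-cdf-definition}.

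Fix \((s,a)\) and \(x\in\mathbb R\). Let \(G\) denote the random variable \(R(s,a)+\gamma\bigl(Z'-\alpha\log\pi(a'\mid s')\bigr)\) in \eqref{eq:dist_soft_bellman_operator}. Its joint construction is: first draw \((r,s',a')\) from the law induced by \(R(s,a)\) and \(\mathcal P^\pi(\cdot\mid s,a)\); then, conditionally on \((s',a')\), draw \(Z'\sim\mathcal Z(s',a')\) independently of \(r\) given \((s',a')\). This is the same coupling implicit in the definition of the operator. By the tower property,
\[
F_{\mathcal T^\pi_{DS}\mathcal Z(s,a)}(x)=\mathbb P(G\le x)=\mathbb E\bigl[\mathbb P(G\le x\mid r,s',a')\bigr].
\]
Assumption~\ref{ass:soft-policy-admissibility} guarantees that \(\log\pi(a'\mid s')\) is finite almost surely. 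Since \(\gamma>0\), the event \(\{G\le x\}\) can then be rewritten almost surely as
\[
\Bigl\{Z'\le \tfrac{x-r}{\gamma}+\alpha\log\pi(a'\mid s')\Bigr\}.
\]
Its conditional probability given \((r,s',a')\) is therefore \(F_{\mathcal Z(s',a')}\bigl(\frac{x-r}{\gamma}+\alpha\log\pi(a'\mid s')\bigr)\). Substituting this into the outer expectation gives exactly \((\mathcal T^\pi_{\mathrm{cdf}}F_{\mathcal Z})(s,a)(x)\), using the identification \(F_{\mathcal Z}(s',a')=F_{\mathcal Z(s',a')}\). Because this holds for every \(x\) and every \((s,a)\), the two CDF fields agree. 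Proposition~\ref{prop:soft-bellman-self-map} then ensures that the right-hand side lies in \(\mathcal X^{\mathrm{cdf}}\), so the identity \(F_{\mathcal T^\pi_{DS}\mathcal Z}=\mathcal T^{\pi}_{\mathrm{cdf}}F_{\mathcal Z}\) holds as an equality of elements of that domain. In particular, \(\mathcal T^\pi_{DS}\mathcal Z\) has an admissible CDF field.

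The main obstacle is the measure-theoretic justification of the conditional step. We need the map \((s',a',y)\mapsto F_{\mathcal Z(s',a')}(y)\) to be jointly measurable, so that the conditional probability is a regular version and the outer expectation is well defined. We also need \(Z'\) to be conditionally independent of \(r\) given \((s',a')\).

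For conditional independence, I would take it as part of the definition of \eqref{eq:dist_soft_bellman_operator}: \(Z'\) is drawn from a kernel \((s',a')\mapsto\mathcal Z(s',a')\). For joint measurability, I would first treat \(F_{\mathcal Z(\cdot)}(y)\) as measurable in \((s',a')\) for each fixed \(y\), which is implicit in \(\mathcal Z\) being a Markov kernel. Right-continuity in \(y\) then upgrades this to joint measurability: approximate \(y\) from the right by dyadic rationals and pass to the limit. The integrand is bounded in \([0,1]\), so no integrability issue arises. Measurability of the result in \((s,a)\) is supplied directly by \eqref{eq:soft-measurability-assumption}.
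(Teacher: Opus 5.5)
Your proposal is correct and follows essentially the same route as the paper's proof: condition on the one-step randomness, use $\gamma>0$ to rewrite $\{G\le x\}$ as $\{Z'\le \frac{x-r}{\gamma}+\alpha\log\pi(a'\mid s')\}$, and identify the conditional probability with the successor CDF $F_{\mathcal Z}(s',a')$. Your extra measure-theoretic steps are sound refinements that the paper leaves implicit: joint measurability via right-continuity, conditional independence of $Z'$ and $r$ given $(s',a')$ (automatic here anyway, since $r=\mathcal R(s,a,s')$ is a function of $s'$), and the appeal to Proposition~\ref{prop:soft-bellman-self-map} to place the result in $\mathcal X^{\mathrm{cdf}}$.
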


\begin{proof}[Proof sketch]
Evaluate the CDF of \((\mathcal T^\pi_{DS}\mathcal Z)(s,a)\) at \(x\), condition on \((s',a')\sim\mathcal P^\pi(\cdot\mid s,a)\), and identify the resulting conditional probability with the successor CDF \(F_{\mathcal Z}(s',a')\). This yields \eqref{eq:soft-bellman-cdf-definition}, and hence \(F_{\mathcal T^\pi_{DS}\mathcal Z}=\mathcal T^{\pi}_{\mathrm{cdf}}F_{\mathcal Z}\). A full proof is given in Appendix~\ref{app:proof-soft-cdf-consistency}.
\end{proof}

As an immediate consequence, any distribution-level fixed point induces a corresponding CDF-level fixed point.

\begin{corollary}[Fixed-point correspondence]
\label{cor:soft-fixed-point-correspondence}
Let \(\mathcal Z^\pi\) be a return-distribution field such that \(F_{\mathcal Z^\pi}\in\mathcal X^{\mathrm{cdf}}\) under Assumption~\ref{assump:soft-xcdf-domain}. If $\mathcal T^\pi_{DS}\mathcal Z^\pi=\mathcal Z^\pi,$
then $\mathcal T^{\pi}_{\mathrm{cdf}}F_{\mathcal Z^\pi}=F_{\mathcal Z^\pi}.$
\end{corollary}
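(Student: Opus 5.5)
The plan is to apply Proposition~\ref{prop:soft-cdf-consistency} directly to the field \(\mathcal Z^\pi\) and then rewrite using the distribution-level fixed-point relation.

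\textbf{Applicability.} Proposition~\ref{prop:soft-cdf-consistency} requires a return-distribution field whose CDF field lies in \(\mathcal X^{\mathrm{cdf}}\). This is exactly the hypothesis \(F_{\mathcal Z^\pi}\in\mathcal X^{\mathrm{cdf}}\), guaranteed by Assumption~\ref{assump:soft-xcdf-domain}. The operator \(\mathcal T^{\pi}_{\mathrm{cdf}}\) of Definition~\ref{def:cdf-soft-bellman-operator} is taken under Assumption~\ref{ass:soft-policy-admissibility}, which is standing in this subsection. Proposition~\ref{prop:soft-bellman-self-map} then also ensures \(\mathcal T^{\pi}_{\mathrm{cdf}}F_{\mathcal Z^\pi}\in\mathcal X^{\mathrm{cdf}}\), although only the identity itself is needed.

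\textbf{Main step.} Proposition~\ref{prop:soft-cdf-consistency} gives \(F_{\mathcal T^\pi_{DS}\mathcal Z^\pi}=\mathcal T^{\pi}_{\mathrm{cdf}}F_{\mathcal Z^\pi}\). Since \(\mathcal T^\pi_{DS}\mathcal Z^\pi=\mathcal Z^\pi\) as return-distribution fields, the laws agree at every \((s,a)\). A probability law on \(\mathbb R\) determines its CDF uniquely, so \(F_{(\mathcal T^\pi_{DS}\mathcal Z^\pi)(s,a)}=F_{\mathcal Z^\pi(s,a)}\) pointwise in \(x\). Hence \(F_{\mathcal T^\pi_{DS}\mathcal Z^\pi}=F_{\mathcal Z^\pi}\) as CDF fields. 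Chaining the two equalities yields \(\mathcal T^{\pi}_{\mathrm{cdf}}F_{\mathcal Z^\pi}=F_{\mathcal Z^\pi}\).

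\textbf{Expected difficulty.} There is essentially no obstacle. The only point needing care is bookkeeping: the equality of fields must be read as equality for every \((s,a)\), and the CDF map \(P\mapsto F_P\) is well defined. Both hold by the definitions in Section~\ref{subsec:cdf-field-domain}.
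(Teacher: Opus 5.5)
Your proposal is correct and matches the paper's proof, which is the one-line chain \(\mathcal T^{\pi}_{\mathrm{cdf}}F_{\mathcal Z^\pi}=F_{\mathcal T^\pi_{DS}\mathcal Z^\pi}=F_{\mathcal Z^\pi}\) obtained from Proposition~\ref{prop:soft-cdf-consistency} together with the distribution-level fixed-point relation. Your remarks on applicability and on reading field equality pointwise in \((s,a)\) simply make explicit what the paper leaves implicit.
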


\begin{proof}
This follows immediately from Proposition~\ref{prop:soft-cdf-consistency}, since $\mathcal T^{\pi}_{\mathrm{cdf}}F_{\mathcal Z^\pi}=F_{\mathcal T^\pi_{DS}\mathcal Z^\pi}=F_{\mathcal Z^\pi}.$
\end{proof}

This identifies the CDF-level operator as the appropriate formulation of the distribution-level soft Bellman recursion on \(\mathcal X^{\mathrm{cdf}}\). We next turn to the corresponding policy evaluation problem under the Cram\'er metric.

\subsection{Distributional soft policy evaluation}
\label{subsec:soft-evaluation}
Having established the CDF-level formulation of the soft Bellman operator, we now turn to the corresponding soft policy evaluation problem under the Cram\'er metric. Since \(\mathcal T^{\pi}_{\mathrm{cdf}}\) gives the CDF-level representation of the underlying distributional soft Bellman dynamics, the analysis may be carried out directly on \(\mathcal X^{\mathrm{cdf}}\).

The key observation is that the soft Bellman action remains contractive under the Cram\'er metric, with the same factor \(\sqrt{\gamma}\) as in the standard distributional policy evaluation setting.

\begin{theorem}[Cram\'er contraction of the distributional soft Bellman operator]
\label{thm:soft-contraction}
Suppose Assumption~\ref{ass:soft-policy-admissibility} holds. Then \(\mathcal T^{\pi}_{\mathrm{cdf}}\) is a \(\sqrt{\gamma}\)-contraction on \((\mathcal X^{\mathrm{cdf}},d_{\mathrm{Cr}})\). More precisely, for any \(F_{\mathcal Z_1},F_{\mathcal Z_2}\in\mathcal X^{\mathrm{cdf}}\), $d_{\mathrm{Cr}}
\bigl(
\mathcal T^{\pi}_{\mathrm{cdf}}F_{\mathcal Z_1},
\mathcal T^{\pi}_{\mathrm{cdf}}F_{\mathcal Z_2}
\bigr)
\le
\sqrt{\gamma}\,d_{\mathrm{Cr}}(F_{\mathcal Z_1},F_{\mathcal Z_2}).$
\end{theorem}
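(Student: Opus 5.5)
The plan is to fix an arbitrary state–action pair \((s,a)\) and bound the pointwise squared Cram\'er distance between \((\mathcal T^{\pi}_{\mathrm{cdf}}F_{\mathcal Z_1})(s,a)\) and \((\mathcal T^{\pi}_{\mathrm{cdf}}F_{\mathcal Z_2})(s,a)\). Taking the supremum over \((s,a)\) and a square root then gives the theorem. Proposition~\ref{prop:soft-bellman-self-map} ensures both images lie in \(\mathcal X^{\mathrm{cdf}}\), so \(d_{\mathrm{Cr}}\) is defined on them. By linearity of expectation in \eqref{eq:soft-bellman-cdf-definition}, the pointwise difference is
\begin{equation*}
\Delta(x):=\mathbb E\Bigl[G_{s',a'}\Bigl(\tfrac{x-r}{\gamma}+\alpha\log\pi(a'\mid s')\Bigr)\Bigr],
\qquad G_{s',a'}:=F_{\mathcal Z_1}(s',a')-F_{\mathcal Z_2}(s',a').
\end{equation*}
This expectation is well defined because \(|G_{s',a'}|\le 1\).

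First, apply Jensen's inequality (Cauchy–Schwarz for the probability measure of \((r,s',a')\)) to get
\begin{equation*}
\Delta(x)^2\le \mathbb E\Bigl[G_{s',a'}\Bigl(\tfrac{x-r}{\gamma}+\alpha\log\pi(a'\mid s')\Bigr)^2\Bigr].
\end{equation*}
Next, integrate over \(x\in\mathbb R\) and use Tonelli's theorem (the integrand is nonnegative and jointly measurable) to move the \(dx\)-integral inside the expectation. For each fixed realisation \((r,s',a')\), substitute \(y=\tfrac{x-r}{\gamma}+\alpha\log\pi(a'\mid s')\), so that \(dx=\gamma\,dy\). The shift disappears and only the slope matters:
\begin{equation*}
\int_{\mathbb R}G_{s',a'}\Bigl(\tfrac{x-r}{\gamma}+\alpha\log\pi(a'\mid s')\Bigr)^2dx=\gamma\int_{\mathbb R}G_{s',a'}(y)^2\,dy\le \gamma\, d_{\mathrm{Cr}}(F_{\mathcal Z_1},F_{\mathcal Z_2})^2 .
\end{equation*}
Here \(\log\pi(a'\mid s')\) is a.s. finite by Assumption~\ref{ass:soft-policy-admissibility}. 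Taking the expectation of a bound that is uniform in the realisation yields \(\int\Delta^2\,dx\le\gamma\,d_{\mathrm{Cr}}^2\), and the supremum over \((s,a)\) completes the argument.

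The main obstacle is measure-theoretic bookkeeping rather than the estimate itself. Tonelli's theorem needs joint measurability of \((x,r,s',a')\mapsto G_{s',a'}(\cdots)^2\). I would obtain this from the right-continuity and monotonicity of CDFs in the argument, together with the measurability of the successor CDF fields in \((s',a')\) that is implicit in the definition of the operator (Assumption~\ref{ass:soft-policy-admissibility}).

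The contraction bound itself does not use the first-moment condition \eqref{eq:soft-shift-integrability-assumption}. That condition is needed only for domain preservation, which is supplied by Proposition~\ref{prop:soft-bellman-self-map}.
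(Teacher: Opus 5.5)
Your proposal is correct and follows the paper's proof essentially step for step: fix $(s,a)$, apply Jensen's inequality to the inner expectation, exchange the $dx$-integral with the expectation, and use the change of variable $y=(x-r)/\gamma+\alpha\log\pi(a'\mid s')$ to extract the factor $\gamma$, then bound by $d_{\mathrm{Cr}}^2$ and take the supremum over $(s,a)$. Your remarks that Tonelli requires joint measurability, and that the first-moment condition is needed only for domain preservation and not for the contraction estimate, are accurate points the paper leaves implicit.
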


\begin{proof}[Proof sketch]
For each \((s,a)\), the difference of the two updated CDFs is the expectation of the shifted successor CDF difference. Applying Jensen's inequality \citep{jensen} and the change of variable \(y=(x-r)/\gamma+\alpha\log\pi(a'\mid s')\) gives a factor \(\gamma\) at the level of squared Cram\'er distance. Taking the supremum over \((s,a)\) and then square roots yields $d_{\mathrm{Cr}}
\bigl(
\mathcal T^{\pi}_{\mathrm{cdf}}F_{\mathcal Z_1},
\mathcal T^{\pi}_{\mathrm{cdf}}F_{\mathcal Z_2}
\bigr)
\le
\sqrt{\gamma}\,d_{\mathrm{Cr}}(F_{\mathcal Z_1},F_{\mathcal Z_2})$. A full proof is given in Appendix~\ref{app:proof-soft-contraction}.
\end{proof}

To pass from contraction to soft policy evaluation, we invoke the completeness \citep{completeness} of the ambient CDF field space.

\begin{lemma}[Complete CDF field space \citep{our1st}]
\label{lem:cdf-space-complete}
The metric space \((\mathcal X^{\mathrm{cdf}},d_{\mathrm{Cr}})\) is complete.
\end{lemma}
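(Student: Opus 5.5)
The plan is to reduce completeness of $(\mathcal X^{\mathrm{cdf}},d_{\mathrm{Cr}})$ to completeness of a bounded-function space with values in a Hilbert space. Then we check that the limits stay CDFs. Centre each CDF: for $F\in\Gamma_F$, $C(F)=F-F_{\delta_0}$ lies in $L^2(\mathbb R)$, because $\|C(F)\|_{L^2}=d_C(P,\delta_0)<\infty$. Differences of CDFs are differences of centred CDFs, so
\[
d_{\mathrm{Cr}}(F_{\mathcal Z_1},F_{\mathcal Z_2})=\sup_{(s,a)}\bigl\|C(F_{\mathcal Z_1}(s,a))-C(F_{\mathcal Z_2}(s,a))\bigr\|_{L^2}.
\]
Hence the map $F_{\mathcal Z}\mapsto C\circ F_{\mathcal Z}$ is an isometry of $\mathcal X^{\mathrm{cdf}}$ into $\ell^\infty(\mathcal S\times\mathcal A;L^2(\mathbb R))$. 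That target space is complete, since $L^2$ is complete and uniform limits of bounded maps are bounded. It therefore suffices to show that the image is closed.

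Take a Cauchy sequence $F_n\in\mathcal X^{\mathrm{cdf}}$. For each fixed $(s,a)$, the sequence $C(F_n(s,a))$ is Cauchy in $L^2$, uniformly in $(s,a)$. So it converges uniformly to some $G(s,a)\in L^2$, and $\sup_{(s,a)}\|G(s,a)\|_{L^2}<\infty$. Set $F(s,a):=G(s,a)+F_{\delta_0}$. The main obstacle is to show that $F(s,a)$ can be chosen to be a genuine CDF: non-decreasing, right-continuous, with limits $0$ at $-\infty$ and $1$ at $+\infty$.

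For this, pass to a subsequence along which $F_{n_k}(s,a)\to F(s,a)$ almost everywhere. Monotonicity survives a.e., and $0\le F\le 1$ a.e. Replace $F(s,a)$ by its right-continuous non-decreasing version, $\tilde F(x):=\operatorname{ess\,inf}_{y>x}F(y)$ restricted to a full-measure set. This changes nothing in $L^2$. The limits at $\pm\infty$ then follow from square integrability. If $\lim_{x\to+\infty}\tilde F(x)=c<1$, monotonicity gives $(\tilde F-1)^2\ge(1-c)^2$ on a half-line, which contradicts $\tilde F-F_{\delta_0}\in L^2$. The same argument handles $-\infty$. So $\tilde F(s,a)$ is the CDF of a probability law $P_{s,a}$ with $d_C(P_{s,a},\delta_0)=\|G(s,a)\|_{L^2}<\infty$.

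The resulting field has a uniform finite-Cram\'er bound, so it belongs to $\mathcal X^{\mathrm{cdf}}$. If measurability in $(s,a)$ is part of the definition, it is inherited by taking pointwise limits of measurable maps along the subsequence. Finally, $d_{\mathrm{Cr}}(F_n,\tilde F)\to 0$ by the uniform $L^2$ convergence, which proves completeness. Alternatively, one can simply invoke the corresponding result of \citep{our1st}, as the statement indicates; the above is the self-contained route.
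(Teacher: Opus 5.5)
The paper gives no argument for this lemma; it imports it from \citep{our1st}. Your self-contained route is correct and can stand in for the citation. The centring map embeds $(\mathcal X^{\mathrm{cdf}},d_{\mathrm{Cr}})$ isometrically into the Banach space $\ell^\infty(\mathcal S\times\mathcal A;L^2(\mathbb R))$, so completeness reduces to showing that each coordinate limit $G(s,a)+F_{\delta_0}$ has a CDF representative. Your right-continuous monotone modification, together with the half-line argument for the tails, does exactly this. One detail is left implicit: the modification agrees a.e.\ with the a.e.\ limit. This holds because a function that is monotone on a full-measure set has at most countably many jumps there. What your version buys over the citation is transparency. It isolates the only non-trivial ingredient: the finite-Cram\'er CDF class is closed under $L^2$ limits of centred CDFs, and the correct tail behaviour is forced by square-integrability of $\tilde F-F_{\delta_0}$. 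It also shows that no structure on $\mathcal S\times\mathcal A$ is needed for the domain as displayed in \eqref{eq:xcdf_def_softpaper}. The citation keeps the paper short, but leaves open which version of the domain is covered, in particular whether measurability in $(s,a)$ is included.

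That last point is where your hedge is too quick. The paper does implicitly treat measurability of $(s,a)\mapsto F_{\mathcal Z}(s,a)(x)$ as part of membership in $\mathcal X^{\mathrm{cdf}}$ (see the end of the proof of Proposition~\ref{prop:soft-bellman-self-map}). This property is not inherited by ``pointwise limits along the subsequence'', for two reasons. First, the exceptional null set of your a.e.\ subsequence depends on $(s,a)$. More fundamentally, Cram\'er convergence of CDFs only gives convergence at continuity points of the limit: $F_{\delta_{1/n}}\to F_{\delta_0}$ in $d_C$, but not at $x=0$. So for fixed $x$ the values $F_n(s,a)(x)$ need not converge to $\tilde F(s,a)(x)$. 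The repair is short. $L^2$ convergence of CDFs implies $\tilde F(s,a)(q^-)\le\liminf_n F_n(s,a)(q)\le\limsup_n F_n(s,a)(q)\le\tilde F(s,a)(q)$ for every $q\in\mathbb R$. Hence $\tilde F(s,a)(x)=\inf_{q\in\mathbb Q,\,q>x}\limsup_n F_n(s,a)(q)$, which is a countable infimum of measurable functions of $(s,a)$ and no subsequence is needed.
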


Combining Theorem~\ref{thm:soft-contraction} with Proposition~\ref{prop:soft-bellman-self-map} and Lemma~\ref{lem:cdf-space-complete} yields the existence and uniqueness of the fixed point for distributional soft evaluation.

\begin{proposition}[Unique fixed point for distributional soft policy evaluation]
\label{prop:soft-fixed-point}
Suppose Assumption~\ref{ass:soft-policy-admissibility} holds. Then \(\mathcal T^{\pi}_{\mathrm{cdf}}\) admits a unique fixed point
$F^\star \in \mathcal X^{\mathrm{cdf}}$ such that $\mathcal T^{\pi}_{\mathrm{cdf}}F^\star = F^\star.$
\end{proposition}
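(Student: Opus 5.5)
The plan is a direct application of the Banach fixed-point theorem on the complete metric space $(\mathcal X^{\mathrm{cdf}},d_{\mathrm{Cr}})$. By Proposition~\ref{prop:soft-bellman-self-map}, Assumption~\ref{ass:soft-policy-admissibility} guarantees that $\mathcal T^{\pi}_{\mathrm{cdf}}$ maps $\mathcal X^{\mathrm{cdf}}$ into itself. By Theorem~\ref{thm:soft-contraction}, it is a $\sqrt{\gamma}$-contraction with $\sqrt{\gamma}<1$ because $\gamma\in(0,1)$. By Lemma~\ref{lem:cdf-space-complete}, the space is complete. These are exactly the hypotheses of Banach's theorem.

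\emph{Well-definedness of the metric.} Before invoking Banach's theorem I would check that $d_{\mathrm{Cr}}$ is a genuine finite metric on $\mathcal X^{\mathrm{cdf}}$.

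Finiteness follows from the triangle inequality in $L^2(\mathbb R)$, applied pointwise in $(s,a)$:
\[
d_C(\mathcal Z_1(s,a),\mathcal Z_2(s,a))\le d_C(\mathcal Z_1(s,a),\delta_0)+d_C(\mathcal Z_2(s,a),\delta_0).
\]
The uniform bound in the definition \eqref{eq:xcdf_def_softpaper} then gives $d_{\mathrm{Cr}}<\infty$.

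For definiteness, suppose $d_{\mathrm{Cr}}=0$. Then for each $(s,a)$ the two CDFs agree Lebesgue-almost everywhere. Since CDFs are right-continuous, they agree everywhere, so the two fields coincide. This point is presumably already implicit in Lemma~\ref{lem:cdf-space-complete}, and I would simply cite it.

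\emph{Existence.} Fix any $F_0\in\mathcal X^{\mathrm{cdf}}$; a concrete choice is the constant field $F_{\delta_0}$, which trivially lies in the domain. Define the iterates $F_{n+1}=\mathcal T^{\pi}_{\mathrm{cdf}}F_n$. The contraction gives
\[
d_{\mathrm{Cr}}(F_{n+1},F_n)\le \gamma^{n/2}d_{\mathrm{Cr}}(F_1,F_0).
\]
Summing the geometric series shows that $(F_n)$ is Cauchy, so by completeness it has a limit $F^\star\in\mathcal X^{\mathrm{cdf}}$.

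The contraction also makes $\mathcal T^{\pi}_{\mathrm{cdf}}$ Lipschitz, hence continuous. Therefore
\[
\mathcal T^{\pi}_{\mathrm{cdf}}F^\star=\lim_n \mathcal T^{\pi}_{\mathrm{cdf}}F_n=\lim_n F_{n+1}=F^\star.
\]

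\emph{Uniqueness.} Suppose $F^\star$ and $G^\star$ are both fixed points. Then
\[
d_{\mathrm{Cr}}(F^\star,G^\star)=d_{\mathrm{Cr}}(\mathcal T^{\pi}_{\mathrm{cdf}}F^\star,\mathcal T^{\pi}_{\mathrm{cdf}}G^\star)\le\sqrt{\gamma}\,d_{\mathrm{Cr}}(F^\star,G^\star).
\]
The distance is finite by the previous step, so this forces $d_{\mathrm{Cr}}(F^\star,G^\star)=0$, and hence $F^\star=G^\star$.

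\emph{Main obstacle.} There is no real obstacle, since every analytic ingredient has already been established. The only step needing care is the bookkeeping in the well-definedness check above: that $d_{\mathrm{Cr}}$ is finite and definite on $\mathcal X^{\mathrm{cdf}}$, so that the Banach argument applies verbatim.
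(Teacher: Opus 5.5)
Your proposal is correct and takes the same route as the paper. The paper combines the self-map property (Proposition~\ref{prop:soft-bellman-self-map}), the $\sqrt{\gamma}$-contraction (Theorem~\ref{thm:soft-contraction}) and completeness (Lemma~\ref{lem:cdf-space-complete}), and then invokes the Banach fixed-point theorem; you additionally unpack Banach's argument and check that $d_{\mathrm{Cr}}$ is finite and definite on $\mathcal X^{\mathrm{cdf}}$ (via the $L^2$ triangle inequality through $F_{\delta_0}$ and right-continuity of CDFs), which are valid details the paper leaves implicit in the cited completeness lemma.
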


\begin{proof}
By Proposition~\ref{prop:soft-bellman-self-map}, \(\mathcal T^{\pi}_{\mathrm{cdf}}\) is a self-map on \(\mathcal X^{\mathrm{cdf}}\). By Theorem~\ref{thm:soft-contraction}, it is a contraction on \((\mathcal X^{\mathrm{cdf}},d_{\mathrm{Cr}})\). The claim follows from Lemma~\ref{lem:cdf-space-complete} and the Banach fixed-point theorem \citep{banach}.
\end{proof}

Proposition~\ref{prop:soft-fixed-point} should be distinguished from Corollary~\ref{cor:soft-fixed-point-correspondence}. The proposition gives the abstract existence and uniqueness of a fixed point \(F^\star\) for \(\mathcal T^{\pi}_{\mathrm{cdf}}\) on \(\mathcal X^{\mathrm{cdf}}\), whereas the corollary shows that any distribution-level fixed point is represented faithfully at the CDF level. Under Assumption~\ref{assump:soft-xcdf-domain}, the policy-induced field \(F_{\mathcal Z^\pi}\) is such a CDF-level fixed point, and uniqueness therefore yields $F_{\mathcal Z^\pi}=F^\star.$ Thus the abstract fixed point obtained above is precisely the CDF representation of the policy-induced soft return field.

The fixed-point characterisation immediately yields convergence of the associated policy evaluation iterates.

\begin{corollary}[Convergence of soft policy evaluation iterates]
\label{cor:soft-iterate-convergence}
Suppose Assumption~\ref{ass:soft-policy-admissibility}--\ref{assump:soft-xcdf-domain} holds, and let \(F_{\mathcal Z_0}\in\mathcal X^{\mathrm{cdf}}\) be arbitrary. Define the sequence \(\{F_{\mathcal Z_k}\}_{k\ge0}\subset\mathcal X^{\mathrm{cdf}}\) by $F_{\mathcal Z_{k+1}}
=
\mathcal T^{\pi}_{\mathrm{cdf}}F_{\mathcal Z_k}.$
Then $F_{\mathcal Z_k}\to F_{\mathcal Z^\pi}
\text{ in }(\mathcal X^{\mathrm{cdf}},d_{\mathrm{Cr}}),$
where \(F_{\mathcal Z^\pi}\) is the unique fixed point from Proposition~\ref{prop:soft-fixed-point}. More precisely, for every \(k\ge0\), $d_{\mathrm{Cr}}(F_{\mathcal Z_k},F_{\mathcal Z^\pi})\le(\sqrt{\gamma})^k\,d_{\mathrm{Cr}}(F_{\mathcal Z_0},F_{\mathcal Z^\pi}).$
\end{corollary}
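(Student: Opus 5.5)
The argument is a direct application of the Banach fixed-point machinery already assembled, together with the identification $F^\star=F_{\mathcal Z^\pi}$.

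First, under Assumption~\ref{ass:soft-policy-admissibility}, Proposition~\ref{prop:soft-bellman-self-map} guarantees that the recursion $F_{\mathcal Z_{k+1}}=\mathcal T^{\pi}_{\mathrm{cdf}}F_{\mathcal Z_k}$ is well defined. By induction on $k$, every iterate stays in $\mathcal X^{\mathrm{cdf}}$, so all the distances $d_{\mathrm{Cr}}(F_{\mathcal Z_k},\cdot)$ appearing below are finite. This uses the uniform bound in the definition of $\mathcal X^{\mathrm{cdf}}$ and the triangle inequality through $\delta_0$.

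Second, I would identify the limit. Proposition~\ref{prop:soft-fixed-point} gives a unique fixed point $F^\star\in\mathcal X^{\mathrm{cdf}}$. By Assumption~\ref{assump:soft-xcdf-domain}, $F_{\mathcal Z^\pi}\in\mathcal X^{\mathrm{cdf}}$. The distribution-level identity $\mathcal T^\pi_{DS}\mathcal Z^\pi=\mathcal Z^\pi$ was obtained by taking laws in \eqref{eq:rv_bellman_soft}. Corollary~\ref{cor:soft-fixed-point-correspondence} then gives $\mathcal T^{\pi}_{\mathrm{cdf}}F_{\mathcal Z^\pi}=F_{\mathcal Z^\pi}$, and uniqueness forces $F_{\mathcal Z^\pi}=F^\star$. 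This is the only place where the second assumption enters. I regard it as the one step needing care: one must check that the random-variable Bellman equation genuinely holds in law with the successor term distributed as $\mathcal Z^\pi(s',a')$. That in turn requires the conditional independence structure of the trajectory, i.e.\ the Markov property under the stationary policy. I would take this as given from the preliminaries.

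Third, I would prove the rate by induction. The case $k=0$ is trivial. For the inductive step, Theorem~\ref{thm:soft-contraction} and the fixed-point property give
\[
d_{\mathrm{Cr}}(F_{\mathcal Z_{k+1}},F_{\mathcal Z^\pi})
=d_{\mathrm{Cr}}\bigl(\mathcal T^{\pi}_{\mathrm{cdf}}F_{\mathcal Z_k},\mathcal T^{\pi}_{\mathrm{cdf}}F_{\mathcal Z^\pi}\bigr)
\le\sqrt{\gamma}\,d_{\mathrm{Cr}}(F_{\mathcal Z_k},F_{\mathcal Z^\pi}),
\]
which yields $d_{\mathrm{Cr}}(F_{\mathcal Z_k},F_{\mathcal Z^\pi})\le(\sqrt{\gamma})^k d_{\mathrm{Cr}}(F_{\mathcal Z_0},F_{\mathcal Z^\pi})$. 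Since $\gamma\in(0,1)$ and the initial distance is finite, the right-hand side tends to $0$. This gives convergence in $(\mathcal X^{\mathrm{cdf}},d_{\mathrm{Cr}})$ without invoking completeness again; Lemma~\ref{lem:cdf-space-complete} is needed only for existence in Proposition~\ref{prop:soft-fixed-point}.
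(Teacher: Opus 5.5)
Your proposal is correct and follows essentially the same route as the paper: apply the $\sqrt{\gamma}$-contraction of Theorem~\ref{thm:soft-contraction} to $F_{\mathcal Z_k}$ and $F_{\mathcal Z^\pi}$, use the fixed-point property, and iterate, with the identification $F_{\mathcal Z^\pi}=F^\star$ obtained from Corollary~\ref{cor:soft-fixed-point-correspondence} plus uniqueness exactly as in the paper's remark after Proposition~\ref{prop:soft-fixed-point}. Your explicit checks that the iterates remain in $\mathcal X^{\mathrm{cdf}}$ and that $d_{\mathrm{Cr}}(F_{\mathcal Z_0},F_{\mathcal Z^\pi})<\infty$ are small additions that the paper leaves implicit.
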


\begin{proof}
Applying Theorem~\ref{thm:soft-contraction} to \(F_{\mathcal Z_k}\) and \(F_{\mathcal Z^\pi}\) yields $d_{\mathrm{Cr}}
\bigl(
\mathcal T^{\pi}_{\mathrm{cdf}}F_{\mathcal Z_k},
\mathcal T^{\pi}_{\mathrm{cdf}}F_{\mathcal Z^\pi}
\bigr)
\le
\sqrt{\gamma}\,d_{\mathrm{Cr}}(F_{\mathcal Z_k},F_{\mathcal Z^\pi}).$
Since \(F_{\mathcal Z^\pi}\) is a fixed point and \(F_{\mathcal Z_{k+1}}=\mathcal T^{\pi}_{\mathrm{cdf}}F_{\mathcal Z_k}\), this becomes $d_{\mathrm{Cr}}(F_{\mathcal Z_{k+1}},F_{\mathcal Z^\pi})\le\sqrt{\gamma}\,d_{\mathrm{Cr}}(F_{\mathcal Z_k},F_{\mathcal Z^\pi}).$
Iterating the inequality proves $d_{\mathrm{Cr}}(F_{\mathcal Z_k},F_{\mathcal Z^\pi})\le(\sqrt{\gamma})^k\,d_{\mathrm{Cr}}(F_{\mathcal Z_0},F_{\mathcal Z^\pi})$, hence \(F_{\mathcal Z_k}\to F_{\mathcal Z^\pi}\).
\end{proof}

These results show that distributional soft policy evaluation is well posed on the admissible CDF field domain. In particular, the CDF-level distributional soft Bellman operator admits a unique fixed point and generates convergent policy evaluation iterates under the Cram\'er metric. This completes the distributional soft policy evaluation theory at the CDF level.

\section{Spectral form of the distributional soft Bellman dynamics}
\label{sec:spectral-soft}
We now transfer the CDF-level formulation to the spectral domain. Since \(\mathbb V\) is already defined on the admissible CDF field domain in~\eqref{eq:lifted_transport_softpaper}, the spectral soft Bellman operator is obtained from \(\mathcal T^{\pi}_{\mathrm{cdf}}\) by conjugation \citep{operator-conjugate}. The resulting spectral formulation therefore represents exactly the same soft policy evaluation dynamics as at the CDF level.

\begin{definition}[Spectral distributional soft Bellman operator]
\label{def:spectral-soft-bellman}
Define the spectral distributional soft Bellman operator on \(\mathcal X^{\mathrm{adm}}\) by $\mathcal T^{\pi}:=\mathbb V\,\mathcal T^{\pi}_{\mathrm{cdf}}\,\mathbb V^{-1}.$
\end{definition}

Since \(\mathcal X^{\mathrm{adm}}=\mathrm{range}(\mathbb V)\) as in~\eqref{eq:xadm_def_softpaper}, the map \(\mathbb V:\mathcal X^{\mathrm{cdf}}\to\mathcal X^{\mathrm{adm}}\) is bijective. Together with Proposition~\ref{prop:soft-bellman-self-map}, this shows that $\mathcal T^{\pi}:\mathcal X^{\mathrm{adm}}\to\mathcal X^{\mathrm{adm}}$ is a well-defined self-map.

The same conjugation also transports the Cram\'er geometry from the CDF field domain to the spectral domain. In particular, \(\mathcal X^{\mathrm{adm}}\) may be equipped with the induced metric
\begin{equation}
\label{eq:spectral-induced-metric}
\widetilde d_{\mathrm{Cr}}(\nu_1,\nu_2)
:=
d_{\mathrm{Cr}}(\mathbb V^{-1}\nu_1,\mathbb V^{-1}\nu_2),
\qquad
\nu_1,\nu_2\in\mathcal X^{\mathrm{adm}}.
\end{equation}
Thus the spectral formulation represents not only the same evaluation dynamics, but also the same evaluation geometry under transport by \(\mathbb V\). The first point to verify is that the spectral soft Bellman dynamics are exactly the transported image of the CDF-level dynamics.

\begin{proposition}[Transported soft Bellman dynamics]
\label{prop:spectral-soft-decomposition}
For every \(F_{\mathcal Z}\in\mathcal X^{\mathrm{cdf}}\),
\begin{equation}
\label{eq:spectral-soft-decomposition}
\mathbb V\bigl(\mathcal T^{\pi}_{\mathrm{cdf}}F_{\mathcal Z}\bigr)
=
\mathcal T^{\pi}\bigl(\mathbb V F_{\mathcal Z}\bigr).
\end{equation}
\end{proposition}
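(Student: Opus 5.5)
The identity \eqref{eq:spectral-soft-decomposition} is a direct consequence of the definition $\mathcal T^{\pi}:=\mathbb V\,\mathcal T^{\pi}_{\mathrm{cdf}}\,\mathbb V^{-1}$. The only real content is checking that the composite makes sense on the relevant elements. We proceed in three steps.

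\textbf{Step 1: $\mathbb V F_{\mathcal Z}$ lies in the domain of $\mathcal T^{\pi}$.} Fix $F_{\mathcal Z}\in\mathcal X^{\mathrm{cdf}}$ and set $\nu:=\mathbb V F_{\mathcal Z}$. By the definition \eqref{eq:xadm_def_softpaper}, $\mathcal X^{\mathrm{adm}}=\mathrm{range}(\mathbb V)$, so $\nu\in\mathcal X^{\mathrm{adm}}$ and $\mathcal T^{\pi}\nu$ is defined.

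\textbf{Step 2: $\mathbb V^{-1}\mathbb V F_{\mathcal Z}=F_{\mathcal Z}$.} The lifted map acts pointwise in $(s,a)$, so it suffices to show that $\mathcal V=\mathcal U\circ C$ is injective on $\Gamma_F$.
\begin{itemize}
\item The centring map $C(F)=F-F_{\delta_0}$ is injective, being a translation by a fixed function.
\item $\mathcal U$ is injective on $\mathcal H^{\mathrm{cdf}}_\epsilon$. If $\mathcal U H=0$, then $\omega\,\overline{\widehat H(\omega)}=0$ for almost every $\omega$, so $\widehat H=0$ almost everywhere. By Fourier injectivity, $H=0$ as an element of $\mathcal H^{\mathrm{cdf}}_\epsilon$. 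This step requires some care about in which function class $H$ sits: a centred CDF is not obviously in $L^2$.
\item To close the gap, I would use that for $F\in\Gamma_F$ the centred CDF $C(F)$ satisfies $\int C(F)^2\,dx=d_C(P,\delta_0)^2<\infty$, so $C(F)\in L^2(\mathbb R)$. On $L^2$ the unitary Fourier transform is injective. Hence $C(F_1)=C(F_2)$ almost everywhere, so $F_1=F_2$ almost everywhere. Right-continuity of CDFs then upgrades this to $F_1=F_2$ everywhere.
\end{itemize}
This gives $\mathcal V^{-1}\mathcal V F=F$ for every $F\in\Gamma_F$. Applied at each $(s,a)$, it yields $\mathbb V^{-1}\nu=F_{\mathcal Z}$. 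The paper already asserts that the inverse is well defined on the range, so one may alternatively invoke that directly.

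\textbf{Step 3: compute.} Using the definition of $\mathcal T^{\pi}$ and Step 2,
\[
\mathcal T^{\pi}(\mathbb V F_{\mathcal Z})
=\mathbb V\,\mathcal T^{\pi}_{\mathrm{cdf}}\,\mathbb V^{-1}\mathbb V F_{\mathcal Z}
=\mathbb V\bigl(\mathcal T^{\pi}_{\mathrm{cdf}}F_{\mathcal Z}\bigr).
\]
The right-hand side is meaningful because Proposition~\ref{prop:soft-bellman-self-map} gives $\mathcal T^{\pi}_{\mathrm{cdf}}F_{\mathcal Z}\in\mathcal X^{\mathrm{cdf}}$, which is exactly the domain of $\mathbb V$ in \eqref{eq:lifted_transport_softpaper}. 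This establishes \eqref{eq:spectral-soft-decomposition}.

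\textbf{Main obstacle.} The only nontrivial step is the pointwise injectivity of $\mathcal V$ in Step 2. I would handle it through the $L^2$ membership of centred CDFs in $\Gamma_F$ together with Fourier injectivity, and otherwise rely on the construction in \citep{our1st}.
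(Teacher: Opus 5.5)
Your proposal is correct and follows the paper's argument, which unfolds the definition $\mathcal T^{\pi}=\mathbb V\,\mathcal T^{\pi}_{\mathrm{cdf}}\,\mathbb V^{-1}$ and cancels $\mathbb V^{-1}\mathbb V$ on $\mathcal X^{\mathrm{cdf}}$. Your Step 2 validly justifies the invertibility of $\mathbb V$ on its range, which the paper simply asserts: you use $C(F)\in L^2(\mathbb R)$ (since $\int C(F)^2\,dx=d_C^2(P,\delta_0)<\infty$), Plancherel injectivity with $\omega\neq0$ almost everywhere, and right-continuity to pass from almost-everywhere to everywhere equality.
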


\begin{proof}
Let \(F_{\mathcal Z}\in\mathcal X^{\mathrm{cdf}}\). By Definition~\ref{def:spectral-soft-bellman}, $\mathcal T^{\pi}\bigl(\mathbb V F_{\mathcal Z}\bigr)=\mathbb V\,\mathcal T^{\pi}_{\mathrm{cdf}}\,\mathbb V^{-1}\bigl(\mathbb V F_{\mathcal Z}\bigr)=\mathbb V\bigl(\mathcal T^{\pi}_{\mathrm{cdf}}F_{\mathcal Z}\bigr),$ which proves \eqref{eq:spectral-soft-decomposition}.
\end{proof}

The above proposition immediately induce the corresponding spectral fixed-point relation.

\begin{corollary}[Spectral fixed-point correspondence]
\label{cor:spectral-soft-fixed-point}
Suppose Assumption~\ref{ass:soft-policy-admissibility}--\ref{assump:soft-xcdf-domain} holds, and let \(F_{\mathcal Z^\pi}\in\mathcal X^{\mathrm{cdf}}\) be the unique fixed point of \(\mathcal T^{\pi}_{\mathrm{cdf}}\) from Proposition~\ref{prop:soft-fixed-point}. Then $\nu^\pi:=\mathbb V F_{\mathcal Z^\pi}\in \mathcal X^{\mathrm{adm}}$ is the unique fixed point of \(\mathcal T^{\pi}\) on \(\mathcal X^{\mathrm{adm}}\), that is, $\mathcal T^{\pi}\nu^\pi=\nu^\pi.$
\end{corollary}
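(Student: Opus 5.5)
The argument splits into two parts: first that \(\nu^\pi\) is a fixed point of \(\mathcal T^{\pi}\), then that it is the only one. Both parts follow by conjugation from the CDF-level results already established. Nothing analytic about \(\mathcal U\) or the Hilbert spaces \(\mathcal H_\epsilon\) is needed; the only structural fact used is that \(\mathbb V:\mathcal X^{\mathrm{cdf}}\to\mathcal X^{\mathrm{adm}}\) is a bijection, with \(\mathbb V^{-1}\) acting pointwise by \(\mathcal V^{-1}\).

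\emph{Membership and the fixed-point relation.} By Assumption~\ref{assump:soft-xcdf-domain} we have \(F_{\mathcal Z^\pi}\in\mathcal X^{\mathrm{cdf}}\). Since \(\mathcal X^{\mathrm{adm}}=\mathrm{range}(\mathbb V)\), it follows that \(\nu^\pi=\mathbb V F_{\mathcal Z^\pi}\in\mathcal X^{\mathrm{adm}}\). Next, apply Proposition~\ref{prop:spectral-soft-decomposition} with \(F_{\mathcal Z}=F_{\mathcal Z^\pi}\):
\[
\mathcal T^{\pi}\nu^\pi=\mathcal T^{\pi}\bigl(\mathbb V F_{\mathcal Z^\pi}\bigr)=\mathbb V\bigl(\mathcal T^{\pi}_{\mathrm{cdf}}F_{\mathcal Z^\pi}\bigr)=\mathbb V F_{\mathcal Z^\pi}=\nu^\pi .
\]
The third equality uses that \(F_{\mathcal Z^\pi}\) is a fixed point of \(\mathcal T^{\pi}_{\mathrm{cdf}}\). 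This holds either by Corollary~\ref{cor:soft-fixed-point-correspondence} combined with \(\mathcal T^\pi_{DS}\mathcal Z^\pi=\mathcal Z^\pi\), or directly by the identification \(F_{\mathcal Z^\pi}=F^\star\) made after Proposition~\ref{prop:soft-fixed-point}.

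\emph{Uniqueness.} Let \(\mu\in\mathcal X^{\mathrm{adm}}\) satisfy \(\mathcal T^{\pi}\mu=\mu\), and set \(G:=\mathbb V^{-1}\mu\in\mathcal X^{\mathrm{cdf}}\). By Definition~\ref{def:spectral-soft-bellman},
\[
\mathbb V\,\mathcal T^{\pi}_{\mathrm{cdf}}G=\mu=\mathbb V G .
\]
Applying \(\mathbb V^{-1}\) to both sides, which is legitimate because \(\mathbb V\) is injective on \(\mathcal X^{\mathrm{cdf}}\), gives \(\mathcal T^{\pi}_{\mathrm{cdf}}G=G\). Proposition~\ref{prop:soft-fixed-point} gives a unique fixed point in \(\mathcal X^{\mathrm{cdf}}\), so \(G=F^\star=F_{\mathcal Z^\pi}\). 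Hence \(\mu=\mathbb V G=\nu^\pi\).

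The only step needing care is the bijectivity of \(\mathbb V\), specifically its injectivity. I would justify this pointwise. The centring map \(C\) is injective, since \(F_{\delta_0}\) is fixed. The map \(\mathcal U\) is injective because the factor \(\omega\,\overline{\widehat H(\omega)}\) determines \(\widehat H\) almost everywhere, as \(\{\omega=0\}\) is a null set, and Fourier transforms are injective. This is exactly the statement, already recorded in the paper, that \(\mathcal V^{-1}\) is well defined on its range, so it can be cited directly. As an alternative, uniqueness also follows from Banach's theorem on \((\mathcal X^{\mathrm{adm}},\widetilde d_{\mathrm{Cr}})\). That space is complete and \(\mathcal T^{\pi}\) is a \(\sqrt{\gamma}\)-contraction on it, because \(\mathbb V\) is an isometry for the induced metric~\eqref{eq:spectral-induced-metric}, and Theorem~\ref{thm:soft-contraction} and Lemma~\ref{lem:cdf-space-complete} transfer to \(\mathcal X^{\mathrm{adm}}\) under it.
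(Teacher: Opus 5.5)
Your proposal is correct and follows essentially the same route as the paper: existence comes from conjugating the CDF-level identity $\mathcal T^{\pi}_{\mathrm{cdf}}F_{\mathcal Z^\pi}=F_{\mathcal Z^\pi}$ through $\mathbb V$, and uniqueness comes from pulling a spectral fixed point back by $\mathbb V^{-1}$ and invoking the uniqueness in Proposition~\ref{prop:soft-fixed-point}. Your explicit check that $\mathbb V$ is injective is sound, and your Banach-type alternative is also valid (it is the paper's appendix Corollary~\ref{cor:spectral-fixed-point-induced-metric}), but neither adds anything beyond what the paper's argument already takes as given.
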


\begin{proof}
Since \(F_{\mathcal Z^\pi}\) satisfies $\mathcal T^{\pi}_{\mathrm{cdf}}F_{\mathcal Z^\pi}=F_{\mathcal Z^\pi},$ applying \(\mathbb V\) to both sides and using Definition~\ref{def:spectral-soft-bellman} gives $\mathcal T^{\pi}(\mathbb V F_{\mathcal Z^\pi})=\mathbb V(\mathcal T^{\pi}_{\mathrm{cdf}}F_{\mathcal Z^\pi})=\mathbb V F_{\mathcal Z^\pi}.$ Thus \(\nu^\pi=\mathbb V F_{\mathcal Z^\pi}\) is a fixed point of \(\mathcal T^{\pi}\). 

If \(\upsilon\in\mathcal X^{\mathrm{adm}}\) is another fixed point, then \(\mathbb V^{-1}\upsilon\in\mathcal X^{\mathrm{cdf}}\) satisfies $\mathcal T^{\pi}_{\mathrm{cdf}}(\mathbb V^{-1}\upsilon)=\mathbb V^{-1}\upsilon$ by Definition~\ref{def:spectral-soft-bellman}. By uniqueness of the CDF-level fixed point from Proposition~\ref{prop:soft-fixed-point}, $\mathbb V^{-1}\upsilon = F_{\mathcal Z^\pi}.$
Applying \(\mathbb V\) again yields \(\upsilon=\nu^\pi\).
\end{proof}

The same transport mechanism also identifies the spectral iterates with the image of the CDF-level policy evaluation iterates.

\begin{corollary}[Spectral iterate correspondence]
\label{cor:spectral-soft-iterate-correspondence}
Suppose Assumption~\ref{ass:soft-policy-admissibility}--\ref{assump:soft-xcdf-domain} holds. Let \(F_{\mathcal Z_0}\in\mathcal X^{\mathrm{cdf}}\), and define $F_{\mathcal Z_{k+1}}=\mathcal T^{\pi}_{\mathrm{cdf}}F_{\mathcal Z_k}.$ For each \(k\ge0\), define $\nu_k:=\mathbb V F_{\mathcal Z_k}.$ Then $\nu_{k+1}=\mathcal T^{\pi}\nu_k$.
Moreover, if \(F_{\mathcal Z^\pi}\) is the unique fixed point of \(\mathcal T^{\pi}_{\mathrm{cdf}}\) and \(\nu^\pi=\mathbb V F_{\mathcal Z^\pi}\), then $\nu_k \to \nu^\pi$ by transport of the CDF-level convergence from Corollary~\ref{cor:soft-iterate-convergence}.
\end{corollary}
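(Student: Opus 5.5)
The first claim, \(\nu_{k+1}=\mathcal T^{\pi}\nu_k\), follows by induction from Proposition~\ref{prop:spectral-soft-decomposition}. By Proposition~\ref{prop:soft-bellman-self-map}, every iterate \(F_{\mathcal Z_k}\) lies in \(\mathcal X^{\mathrm{cdf}}\), so \(\nu_k=\mathbb V F_{\mathcal Z_k}\) is well defined and lies in \(\mathcal X^{\mathrm{adm}}\). Applying \eqref{eq:spectral-soft-decomposition} with \(F_{\mathcal Z}=F_{\mathcal Z_k}\) gives
\[
\nu_{k+1}=\mathbb V F_{\mathcal Z_{k+1}}=\mathbb V\bigl(\mathcal T^{\pi}_{\mathrm{cdf}}F_{\mathcal Z_k}\bigr)=\mathcal T^{\pi}\bigl(\mathbb V F_{\mathcal Z_k}\bigr)=\mathcal T^{\pi}\nu_k .
\]
Equivalently, one may unfold Definition~\ref{def:spectral-soft-bellman} and use \(\mathbb V^{-1}\mathbb V=\mathrm{id}\) on \(\mathcal X^{\mathrm{cdf}}\).

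For convergence, I would first work in the induced metric \(\widetilde d_{\mathrm{Cr}}\) of \eqref{eq:spectral-induced-metric}, since this is the sense in which the transport is exact. By definition, \(\widetilde d_{\mathrm{Cr}}(\nu_k,\nu^\pi)=d_{\mathrm{Cr}}(F_{\mathcal Z_k},F_{\mathcal Z^\pi})\), because \(\mathbb V^{-1}\nu_k=F_{\mathcal Z_k}\) and \(\mathbb V^{-1}\nu^\pi=F_{\mathcal Z^\pi}\). Corollary~\ref{cor:soft-iterate-convergence} then yields \(\widetilde d_{\mathrm{Cr}}(\nu_k,\nu^\pi)\le(\sqrt\gamma)^k\,\widetilde d_{\mathrm{Cr}}(\nu_0,\nu^\pi)\to0\), so \(\nu_k\to\nu^\pi\) at a geometric rate in the transported geometry.

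The main obstacle is upgrading this to convergence in the ambient norm \(\|\cdot\|_{\infty,\epsilon}\) of \(\mathcal X^{\mathrm{spec}}\), which requires a Lipschitz bound for \(\mathcal V\) with respect to Cram\'er distance. Since \(\mathcal V\) is the affine map \(C\) followed by the linear map \(\mathcal U\), we have \(\mathcal V F_1-\mathcal V F_2=\mathcal U(F_1-F_2)\). From \eqref{eq:U_formula_softpaper}, with \(H=F_1-F_2\),
\[
\|\mathcal U H\|^2_{\mathcal H_\epsilon}=\frac{1}{2\pi}\int_{\mathbb R}\frac{2\pi\,\omega^2}{\omega^2+\epsilon}\,|\widehat H(\omega)|^2\,d\omega=\|H\|^2_{\mathcal H^{\mathrm{cdf}}_\epsilon}.
\]
Because \(\omega^2/(\omega^2+\epsilon)\le1\), Plancherel gives \(\|H\|_{\mathcal H^{\mathrm{cdf}}_\epsilon}\le\|H\|_{L^2}=d_C\). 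Taking the supremum over \((s,a)\) yields \(\|\nu_k-\nu^\pi\|_{\infty,\epsilon}\le d_{\mathrm{Cr}}(F_{\mathcal Z_k},F_{\mathcal Z^\pi})\). Combined with Corollary~\ref{cor:soft-iterate-convergence}, this gives norm convergence at rate \((\sqrt\gamma)^k\) as well.

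The step that needs the most care is the Fourier-transform bookkeeping. The difference of two finite-Cram\'er CDFs lies in \(L^2\), so \(\widehat H\) is defined in the \(L^2\) sense, and the constants \(\sqrt{2\pi}\) and \(1/(2\pi)\) must cancel exactly as written. If this step proves awkward, the conclusion in \(\widetilde d_{\mathrm{Cr}}\) alone still establishes the stated convergence by transport.
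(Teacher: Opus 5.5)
Your derivation of $\nu_{k+1}=\mathcal T^{\pi}\nu_k$ is the same as the paper's: a single application of Proposition~\ref{prop:spectral-soft-decomposition}. Your convergence in $\widetilde d_{\mathrm{Cr}}$ is exactly what the paper means by ``transport'', and the paper writes it out as Corollary~\ref{cor:spectral-iterate-induced-metric} in the appendix.

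The genuine difference is your upgrade to the ambient norm $\|\cdot\|_{\infty,\epsilon}$. The paper never relates $\widetilde d_{\mathrm{Cr}}$ to the Hilbert structure of $\mathcal X^{\mathrm{spec}}$, so its convergence claim holds essentially by construction of the pulled-back metric. Your computation, whose constants do cancel, shows two things: $\mathcal U$ is an isometry $\mathcal H^{\mathrm{cdf}}_\epsilon\to\mathcal H_\epsilon$, and $\|\cdot\|_{\mathcal H^{\mathrm{cdf}}_\epsilon}\le\|\cdot\|_{L^2}$. Hence $\mathbb V$ is $1$-Lipschitz from $(\mathcal X^{\mathrm{cdf}},d_{\mathrm{Cr}})$ into $(\mathcal X^{\mathrm{spec}},\|\cdot\|_{\infty,\epsilon})$, and the spectral iterates converge in the actual spectral norm at rate $(\sqrt\gamma)^k$. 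This is a real strengthening of what the paper states.

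Two small caveats:
\begin{itemize}
\item Because of the complex conjugate in \eqref{eq:U_formula_softpaper}, $\mathcal U$ is only real-linear, not linear. Additivity on the real-valued difference $F_1-F_2$ is all you actually use, so the argument is unaffected.
\item Your bound is one-sided, because the weight $\omega^2/(\omega^2+\epsilon)$ damps low frequencies. For example, take $H=\mathbbm 1_{[0,L)}$, i.e.\ $\delta_0$ versus $\delta_L$: the $\mathcal H^{\mathrm{cdf}}_\epsilon$ norm stays bounded in $L$, while $d_C=\sqrt L$. So the bound transfers convergence, but it does not by itself transfer completeness or the $\sqrt\gamma$-contraction to $\|\cdot\|_{\infty,\epsilon}$. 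That is presumably why the paper's appendix works with $\widetilde d_{\mathrm{Cr}}$ throughout.
\end{itemize}
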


\begin{proof}
By Proposition~\ref{prop:spectral-soft-decomposition}, $\nu_{k+1}=\mathbb V F_{\mathcal Z_{k+1}}=\mathbb V(\mathcal T^{\pi}_{\mathrm{cdf}}F_{\mathcal Z_k})=\mathcal T^{\pi}(\mathbb V F_{\mathcal Z_k})=\mathcal T^{\pi}\nu_k,$ which forms the spectral domain iteration. The convergence statement follows by transporting the CDF-level iterates from Corollary~\ref{cor:soft-iterate-convergence}.
\end{proof}

These results provide a spectral realisation of distributional soft evaluation. The spectral domain is not a separate approximation model, but the transported image of the CDF-level soft Bellman dynamics under \(\mathbb V\). In particular, the spectral fixed point and spectral iterates are exactly the images of their CDF-level counterparts under the same transport. Additional direct results of this transport on the spectral domain, including the induced-metric contraction viewpoint, are recorded in Appendix~\ref{app:spectral-consequences}.

\section{Relation to distributional soft policy iteration}
\label{sec:dspi-perspective}
We now explain how the evaluation theory developed in Sections~\ref{sec:cdf-soft-evaluation} and~\ref{sec:spectral-soft} enters the framework of distributional soft policy iteration. The key point is that the soft return process introduced in Section~\ref{subsec:basic-notation} provides the state-action conditioned evaluation object underlying this iteration.

Let \(\rho\) be an initial-state distribution. Under a stationary policy \(\pi\), the maximum-entropy objective \citep{haarnoja17,sac1,sac2,neu17,geist19} is
\begin{equation}
\label{eq:maxent-objective}
J(\pi;\rho)
:=
\mathbb E_{
\substack{
S_t\sim\rho,\\
A_{t+k}\sim\pi(\cdot\mid S_{t+k}),\,k\ge0,\\
S_{t+k+1}\sim\mathcal P(\cdot\mid S_{t+k},A_{t+k}),\\
R_{t+k}=\mathcal R(S_{t+k},A_{t+k},S_{t+k+1})
}
}
\!\left[
\sum_{k=0}^{\infty}\gamma^k\bigl(R_{t+k}-\alpha\log\pi(A_{t+k}\mid S_{t+k})\bigr)
\right].
\end{equation}
This is the trajectory-level entropy-regularised performance criterion, standard in maximum-entropy reinforcement learning, in which the entire trajectory, including the initial action at time \(t\), is generated by \(\pi\). By contrast, the soft return process \(Z_t^\pi\) in~\eqref{eq:soft_return_process} is the state-action conditioned evaluation object used for soft action-value analysis after conditioning on an initial state-action pair \((S_t,A_t)=(s,a)\). In that evaluation object, the initial action is already prescribed rather than sampled from \(\pi\), so the entropy regularisation begins only with subsequent policy-generated actions. Conditioning this process on \((S_t,A_t)=(s,a)\) yields the return variable \(Z^\pi(s,a)\), whose expectation defines the soft action-value function relevant for policy improvement at a prescribed initial action. When the first moment is finite, the corresponding soft action-value function is given by \(Q^\pi(s,a):=\mathbb E\bigl[Z^\pi(s,a)\bigr],\) and the associated soft state-value function is \(V^\pi(s):=\mathbb E_{a\sim\pi(\cdot\mid s)}\bigl[Q^\pi(s,a)-\alpha\log\pi(a\mid s)\bigr].\) Accordingly,
\begin{equation}
\label{eq:maxent-objective-v-form}
J(\pi;\rho)=\mathbb E_{S_t\sim\rho}[V^\pi(S_t)].
\end{equation}
Thus \(J(\pi;\rho)\) provides the trajectory-level performance criterion, while \(V^\pi\) gives its state-conditioned form. Since the policy-improvement step in soft policy iteration is defined statewise \citep{neu17,geist19,sac1,sac2}, the latter will be the more convenient form below.

The present paper develops this distributional soft evaluation theory explicitly. For each fixed policy \(\pi\), the conditional law of \(Z_t^\pi\) given \((S_t,A_t)=(s,a)\) is the distributional object \(\mathcal Z^\pi(s,a)\) introduced in Section~\ref{subsec:basic-notation}. The evaluation theory developed in Sections~\ref{sec:cdf-soft-evaluation} and~\ref{sec:spectral-soft} shows that this evaluation object admits equivalent representations at the distributional, CDF, and spectral levels, namely \(\mathcal Z^\pi\), \(F_{\mathcal Z^\pi}\), and \(\nu^\pi:=\mathbb V F_{\mathcal Z^\pi}\). Here \(\mathcal Z^\pi\) is the fixed point of the distribution-level soft Bellman operator \(\mathcal T^\pi_{DS}\), \(F_{\mathcal Z^\pi}\in\mathcal X^{\mathrm{cdf}}\) is the unique fixed point of the CDF-level operator \(\mathcal T_{\mathrm{cdf}}^{\pi,\alpha}\), and \(\nu^\pi\in\mathcal X^{\mathrm{adm}}\) is the corresponding fixed point of the spectral operator \(\mathcal T^{\pi}\). When the first moment is finite, these exact evaluation objects induce the same soft critic \(Q^\pi\), and hence the same soft state-value function \(V^\pi\). Recall from \eqref{eq:scalar-soft-bellman-operator} that the scalar soft Bellman operator is
\[
(\mathcal T^\pi_S Q)(s,a)
:=
\mathbb E_{r\sim R(s,a),\,(s',a')\sim\mathcal P^\pi(\cdot\mid s,a)}
\Bigl[
r+\gamma\bigl(Q(s',a')-\alpha\log\pi(a'\mid s')\bigr)
\Bigr].
\]
The following proposition shows that the distributional fixed point induces the corresponding expectation-level one.

\begin{proposition}[Expectation-level fixed-point consistency]
\label{prop:soft-expectation-consistency}
Suppose Assumption~\ref{ass:soft-policy-admissibility}--\ref{assump:soft-xcdf-domain} holds, and let \(\mathcal Z^\pi\) be the fixed point of the distribution-level soft Bellman operator \(\mathcal T^\pi_{DS}\). Assume that the first moment of \(Z^\pi(s,a)\) is finite for every \((s,a)\in\mathcal S\times\mathcal A\). Then the induced soft action-value function \(Q^\pi\) satisfies \(Q^\pi=\mathcal T^\pi_S Q^\pi.\)
\end{proposition}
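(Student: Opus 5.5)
The plan is to take expectations in the distribution-level fixed-point relation $\mathcal Z^\pi=\mathcal T^\pi_{DS}\mathcal Z^\pi$ and justify each interchange using integrability. Fix $(s,a)$. By \eqref{eq:dist_soft_bellman_operator}, the law $\mathcal Z^\pi(s,a)$ coincides with the law of
\[
Y:=R(s,a)+\gamma\bigl(Z'-\alpha\log\pi(a'\mid s')\bigr),
\]
where $(s',a')\sim\mathcal P^\pi(\cdot\mid s,a)$ and, conditionally on $(s',a')$, $Z'\sim\mathcal Z^\pi(s',a')$. Since $Q^\pi(s,a)=\mathbb E[Z^\pi(s,a)]$ depends only on the law, we get $Q^\pi(s,a)=\mathbb E[Y]$. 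It therefore suffices to show that $\mathbb E[Y]$ equals $(\mathcal T^\pi_S Q^\pi)(s,a)$.

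Next I would split $Y$ as $\bigl(r-\gamma\alpha\log\pi(a'\mid s')\bigr)+\gamma Z'$. The first summand is integrable, with
\[
\mathbb E\bigl|r-\gamma\alpha\log\pi(a'\mid s')\bigr|<\infty
\]
by \eqref{eq:soft-shift-integrability-assumption} in Assumption~\ref{ass:soft-policy-admissibility}. For the second summand, condition on $(s',a')$ and apply the tower property:
\[
\mathbb E[Z'\mid s',a']=Q^\pi(s',a').
\]
This gives $\mathbb E[\gamma Z']=\gamma\,\mathbb E_{(s',a')}[Q^\pi(s',a')]$. Linearity of expectation then yields exactly the right-hand side of \eqref{eq:scalar-soft-bellman-operator} evaluated at $Q=Q^\pi$.

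The main obstacle is justifying that $Z'$ is integrable \emph{jointly}, i.e.\ $\mathbb E\bigl[|Q^\pi(s',a')|\bigr]<\infty$ and indeed $\mathbb E|Z'|<\infty$. Pointwise finiteness of first moments alone does not give this, so I would handle it in two steps.

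First, since $Y$ has law $\mathcal Z^\pi(s,a)$ and $Z^\pi(s,a)$ has a finite first moment by hypothesis, $Y$ itself is integrable. We have
\[
\gamma Z'=Y-\bigl(r-\gamma\alpha\log\pi(a'\mid s')\bigr),
\]
and both terms on the right are integrable (the second by \eqref{eq:soft-shift-integrability-assumption}). Hence $\mathbb E|Z'|<\infty$, and the tower-property step above is legitimate. This uses only that $Y$ and $Z'$ live on a common probability space, which is how \eqref{eq:dist_soft_bellman_operator} is constructed.

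Second, the measurability of $(s',a')\mapsto Q^\pi(s',a')$, needed for the outer expectation, comes from the regular conditional structure: $Q^\pi(s',a')=\mathbb E[Z'\mid s',a']$ is a version of a conditional expectation. As a backup, I would note that the measurability clause \eqref{eq:soft-measurability-assumption} of Assumption~\ref{ass:soft-policy-admissibility} gives measurability of the relevant CDF integrals, and the mean can be recovered from the CDF as $\int_0^\infty(1-F)\,dx-\int_{-\infty}^0F\,dx$.

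Putting the pieces together gives $Q^\pi(s,a)=(\mathcal T^\pi_S Q^\pi)(s,a)$ for every $(s,a)$, which is the claim. Assumption~\ref{assump:soft-xcdf-domain} is not really needed beyond guaranteeing the setting is well defined.
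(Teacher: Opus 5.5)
Your proof is correct and follows the same route as the paper: take expectations on both sides of the distributional fixed-point identity $\mathcal Z^\pi=\mathcal T^\pi_{DS}\mathcal Z^\pi$, and use the tower property to replace $Z'$ by $Q^\pi(s',a')$. It is more careful than the paper's one-line appeal to finite first moments. You treat $r-\gamma\alpha\log\pi(a'\mid s')$ as a single term controlled by Assumption~\ref{ass:soft-policy-admissibility}, and you deduce $\mathbb E|Z'|<\infty$ from the integrability of $Y$; together these are what justify splitting the expectation.
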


\begin{proof}
Since \(\mathcal Z^\pi\) is the fixed point of \(\mathcal T^\pi_{DS}\), the corresponding return random variable satisfies \(Z^\pi(s,a)
\overset{D}{=}
R(s,a)+\gamma\bigl(Z^\pi(s',a')-\alpha\log\pi(a'\mid s')\bigr),\) where \((s',a')\sim\mathcal P^\pi(\cdot\mid s,a).\)
By the assumed finiteness of the first moments, taking expectations on both sides is legitimate and yields
\[
Q^\pi(s,a)
=
\mathbb E_{r\sim R(s,a),\,(s',a')\sim\mathcal P^\pi(\cdot\mid s,a)}
\Bigl[
r+\gamma\bigl(Q^\pi(s',a')-\alpha\log\pi(a'\mid s')\bigr)
\Bigr].
\]
By \eqref{eq:scalar-soft-bellman-operator}, this is exactly \(Q^\pi(s,a)=(\mathcal T^\pi_S Q^\pi)(s,a).\) Hence \(Q^\pi=\mathcal T^\pi_S Q^\pi.\)
\end{proof}

Consider the \(k\)-th policy-iteration step, with \(k\ge0\). Once the soft return under the current policy \(\pi_k\) has been evaluated and the induced critic \(Q^{\pi_k}\) is available, the policy-improvement step takes the standard maximum-entropy form. Its relation to \eqref{eq:maxent-objective} is mediated by the state-value representation \(V^\pi\): the scalar objective is the expectation of \(V^\pi\) under the initial-state distribution, while the actual improvement step is defined pointwise in the state variable.

\begin{lemma}[Soft policy improvement \citep{haarnoja17,sac1,sac2}]
\label{lem:soft-policy-improvement}
Let \(\pi_k\) be a policy, and let \(Q^{\pi_k}\) be its soft action-value function. Define \(\pi_{k+1}\) statewise by
\begin{equation}
\label{eq:soft-policy-improvement-step}
\pi_{k+1}(\cdot\mid s)
\in
\arg\max_{\pi(\cdot\mid s)}
\mathbb E_{a\sim\pi(\cdot\mid s)}
\bigl[
Q^{\pi_k}(s,a)-\alpha\log\pi(a\mid s)
\bigr],
\qquad s\in\mathcal S.
\end{equation}
Assume that the corresponding soft action-value function \(Q^{\pi_{k+1}}\) exists. Then
\begin{equation}
\label{eq:soft-policy-improvement-ineq}
Q^{\pi_{k+1}}(s,a)\ge Q^{\pi_k}(s,a),
\qquad
\forall (s,a)\in\mathcal S\times\mathcal A.
\end{equation}
\end{lemma}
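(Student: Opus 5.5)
The plan is the classical Haarnoja-style argument: first a one-step improvement inequality at the level of the soft state value, then repeated unrolling of the scalar soft Bellman equation for $Q^{\pi_k}$ to propagate it to all horizons.

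\textbf{Step 1 (one-step inequality).} For each state $s$, define
\[
V^{\pi_k}(s)=\mathbb E_{a\sim\pi_k(\cdot\mid s)}\bigl[Q^{\pi_k}(s,a)-\alpha\log\pi_k(a\mid s)\bigr].
\]
Since $\pi_{k+1}(\cdot\mid s)$ maximises the functional in \eqref{eq:soft-policy-improvement-step}, and $\pi_k(\cdot\mid s)$ is a feasible candidate, we get
\[
\mathbb E_{a\sim\pi_{k+1}(\cdot\mid s)}\bigl[Q^{\pi_k}(s,a)-\alpha\log\pi_{k+1}(a\mid s)\bigr]\ \ge\ V^{\pi_k}(s)\qquad\text{for all } s.
\]
If one prefers an explicit maximiser, it is the Boltzmann policy $\pi_{k+1}(a\mid s)\propto\exp(Q^{\pi_k}(s,a)/\alpha)$. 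This follows from the Gibbs variational principle, equivalently from nonnegativity of the KL divergence. The inequality itself needs only the argmax property, so the explicit form is not required.

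\textbf{Step 2 (unrolling).} Take $Q^{\pi_k}=\mathcal T^{\pi_k}_S Q^{\pi_k}$, which is Proposition~\ref{prop:soft-expectation-consistency} applied to $\pi_k$. Rewrite it with the next action integrated out:
\[
Q^{\pi_k}(s,a)=\mathbb E\bigl[r+\gamma V^{\pi_k}(s')\bigr].
\]
Apply Step~1 at $s'$ to get
\[
Q^{\pi_k}(s,a)\le \mathbb E\bigl[r+\gamma\bigl(Q^{\pi_k}(s',a')-\alpha\log\pi_{k+1}(a'\mid s')\bigr)\bigr]=(\mathcal T^{\pi_{k+1}}_S Q^{\pi_k})(s,a),
\]
where now $a'\sim\pi_{k+1}(\cdot\mid s')$. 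The operator $\mathcal T^{\pi_{k+1}}_S$ is monotone, since it is an expectation of an increasing function of $Q$. Iterating therefore gives
\[
Q^{\pi_k}\le (\mathcal T^{\pi_{k+1}}_S)^n Q^{\pi_k}\qquad\text{for all } n.
\]

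\textbf{Step 3 (limit).} The remaining task is $(\mathcal T^{\pi_{k+1}}_S)^nQ^{\pi_k}\to Q^{\pi_{k+1}}$. Writing this out explicitly, $(\mathcal T^{\pi_{k+1}}_S)^nQ^{\pi_k}$ equals the expected $n$-step soft return under $\pi_{k+1}$ plus the remainder $\gamma^n\,\mathbb E[Q^{\pi_k}(S_n,A_n)]$. The truncated sum converges to $Q^{\pi_{k+1}}(s,a)$ because that soft action-value is assumed to exist, i.e.\ the series is integrable. The remainder vanishes provided $Q^{\pi_k}$ is bounded, or at least $\gamma^n\,\mathbb E|Q^{\pi_k}(S_n,A_n)|\to0$.

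\textbf{Main obstacle.} The delicate point is this remainder control, not the algebra. Boundedness of $Q^{\pi_k}$ follows from the setting of Section~\ref{sec:cdf-soft-evaluation}. The uniform first-moment condition \eqref{eq:soft-shift-integrability-assumption} gives a uniform bound on the one-step soft shift, and a geometric series argument then yields $\sup|Q^{\pi_k}|\le \sup_{s,a}\mathbb E|r-\gamma\alpha\log\pi_k(a'\mid s')|/(1-\gamma)$.

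Alternatively, $\sup_{s,a}|Q^{\pi_k}(s,a)|$ can be bounded by the uniform Cram\'er bound in $\mathcal X^{\mathrm{cdf}}$ combined with finite first moments. Either way, we obtain $\gamma^n\sup|Q^{\pi_k}|\to0$. I would state this boundedness (imposed on $\pi_k$ via Assumption~\ref{ass:soft-policy-admissibility}) explicitly as the standing hypothesis making the limit legitimate. Letting $n\to\infty$ in Step~2 then gives \eqref{eq:soft-policy-improvement-ineq}.
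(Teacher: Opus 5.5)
Your proposal is correct and follows the standard argument. The paper gives no proof of this lemma; it cites Haarnoja et al., whose proof is exactly your one-step argmax inequality, followed by repeated unrolling of the soft Bellman equation for $Q^{\pi_k}$ and a limit.

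Your main boundedness route is sound. Conditionally on $(S_0,A_0)=(s,a)$, one has $Z^{\pi_k}(s,a)=\sum_{j\ge0}\gamma^jX_j$, where $X_j=R_j-\gamma\alpha\log\pi_k(A_{j+1}\mid S_{j+1})$ is exactly the one-step soft shift of Assumption~\ref{ass:soft-policy-admissibility}. You are also right that such a hypothesis must be added: with only existence of $Q^{\pi_{k+1}}$, the remainder $\gamma^n\mathbb E[Q^{\pi_k}(S_n,A_n)]$ need not vanish, and the conclusion can fail when $Q^{\pi_k}$ is unbounded. In the same spirit, read "existence" of $Q^{\pi_{k+1}}$ as $\sum_j\gamma^j\mathbb E|X_j|<\infty$ under $\pi_{k+1}$, which Assumption~\ref{ass:soft-policy-admissibility} for $\pi_{k+1}$ supplies in Proposition~\ref{prop:exact-evaluation-dspi}; this is what makes the partial sums converge by dominated convergence.

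Drop your alternative route through the Cram\'er bound. The inequality $d_C^2(P,\delta_0)=\int_{-\infty}^0F_P(x)^2\,dx+\int_0^\infty(1-F_P(x))^2\,dx\le\mathbb E|X|$ holds only in this direction. So the uniform Cram\'er bound in $\mathcal X^{\mathrm{cdf}}$, together with pointwise finite means, does not give $\sup_{s,a}|Q^{\pi_k}(s,a)|<\infty$. For example, laws with $1-F(x)=1/x$ on $[1,n)$ have Cram\'er distance to $\delta_0$ at most $\sqrt2$ for all $n$, but their means grow like $\log n$.
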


Combining the fixed-policy evaluation established above with Lemma~\ref{lem:soft-policy-improvement}, we obtain the corresponding distributional soft policy-iteration scheme, in the sense that the present theory provides the exact evaluation component underlying DSPI-style formulations \citep{dsacDuan_2022,dsacma,dsactDuan_2025}.

\begin{proposition}[Distributional soft policy evaluation induces DSPI]
\label{prop:exact-evaluation-dspi}
Suppose that, for each \(k\ge0\), the policy \(\pi_k\) satisfies the admissibility conditions required for the fixed-policy theory developed in Sections~\ref{sec:cdf-soft-evaluation} and~\ref{sec:spectral-soft}, so that the associated evaluation objects \(\mathcal Z^{\pi_k}\), \(F_{\mathcal Z^{\pi_k}}\), \(\nu^{\pi_k}\), and \(Q^{\pi_k}(s,a)=\mathbb E\bigl[Z^{\pi_k}(s,a)\bigr]\) are well defined. If \(\pi_{k+1}\) is defined from \(Q^{\pi_k}\) by \eqref{eq:soft-policy-improvement-step}, then the iteration \(\pi_k\longmapsto\bigl((\mathcal Z^{\pi_k},F_{\mathcal Z^{\pi_k}},\nu^{\pi_k}),\,Q^{\pi_k}\bigr)\longmapsto\pi_{k+1}\) forms a distributional soft policy-iteration scheme with evaluation under the Cram\'er geometry. Moreover, \(Q^{\pi_{k+1}}(s,a)\ge Q^{\pi_k}(s,a),\) \(\forall (s,a)\in\mathcal S\times\mathcal A,\) and
\[
V^{\pi_{k+1}}(s)\ge V^{\pi_k}(s),
\qquad
\forall s\in\mathcal S.
\]
It follows that, for any initial-state distribution \(\rho\), \(J(\pi_{k+1};\rho)\ge J(\pi_k;\rho).\)
\end{proposition}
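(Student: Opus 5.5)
The argument composes the fixed-policy evaluation theory with Lemma~\ref{lem:soft-policy-improvement}; the distributional machinery only supplies the critic. First, for each \(k\), the admissibility hypotheses make Assumptions~\ref{ass:soft-policy-admissibility}--\ref{assump:soft-xcdf-domain} hold for \(\pi_k\). Proposition~\ref{prop:soft-fixed-point} and the remark after it then give \(F_{\mathcal Z^{\pi_k}}\) as the unique fixed point of \(\mathcal T^{\pi_k}_{\mathrm{cdf}}\). Corollary~\ref{cor:spectral-soft-fixed-point} gives \(\nu^{\pi_k}=\mathbb V F_{\mathcal Z^{\pi_k}}\) as the unique spectral fixed point. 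Proposition~\ref{prop:soft-expectation-consistency} shows that \(Q^{\pi_k}=\mathbb E[Z^{\pi_k}]\) is a fixed point of \(\mathcal T^{\pi_k}_S\). Hence the map \(\pi_k\mapsto((\mathcal Z^{\pi_k},F_{\mathcal Z^{\pi_k}},\nu^{\pi_k}),Q^{\pi_k})\) is well defined, and composing it with \eqref{eq:soft-policy-improvement-step} yields the iteration. This settles the descriptive part of the claim.

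Second, the inequality \(Q^{\pi_{k+1}}\ge Q^{\pi_k}\) is exactly Lemma~\ref{lem:soft-policy-improvement}. We may invoke it because \(Q^{\pi_{k+1}}\) exists under the standing admissibility of \(\pi_{k+1}\).

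Third, the statewise value inequality. Fix \(s\) and chain the following:
\[
V^{\pi_{k+1}}(s)=\mathbb E_{a\sim\pi_{k+1}(\cdot\mid s)}\bigl[Q^{\pi_{k+1}}(s,a)-\alpha\log\pi_{k+1}(a\mid s)\bigr]
\ge\mathbb E_{a\sim\pi_{k+1}(\cdot\mid s)}\bigl[Q^{\pi_{k}}(s,a)-\alpha\log\pi_{k+1}(a\mid s)\bigr].
\]
The inequality uses \(Q^{\pi_{k+1}}\ge Q^{\pi_k}\) pointwise together with monotonicity of expectation. The right-hand side is then at least \(\mathbb E_{a\sim\pi_k(\cdot\mid s)}[Q^{\pi_k}(s,a)-\alpha\log\pi_k(a\mid s)]=V^{\pi_k}(s)\), because \(\pi_{k+1}(\cdot\mid s)\) maximises this functional and \(\pi_k(\cdot\mid s)\) is a feasible candidate. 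Finally, \eqref{eq:maxent-objective-v-form} gives \(J(\pi;\rho)=\mathbb E_{S_t\sim\rho}[V^\pi(S_t)]\), so integrating the statewise inequality against \(\rho\) yields \(J(\pi_{k+1};\rho)\ge J(\pi_k;\rho)\).

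The main obstacle is integrability. Every expectation in the chain must be finite, or at least well defined, so that inequalities can be split and compared. For \(Q\)-terms this follows from the assumed finite first moments. For the entropy terms \(\mathbb E_{\pi_{k+1}}[\log\pi_{k+1}]\) I would rely on the uniform first-moment condition \eqref{eq:soft-shift-integrability-assumption} for \(\pi_{k+1}\). That condition controls \(r-\gamma\alpha\log\pi\) jointly, so I would read the admissibility hypothesis as also guaranteeing finite \(V^{\pi_{k+1}}(s)\) and finite \(J\); the statement's phrase ``well defined'' should be interpreted to include this. If needed, I would make the finiteness explicit rather than try to derive it from the combined shift bound alone. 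The same integrability is what justifies applying Fubini when integrating against \(\rho\).
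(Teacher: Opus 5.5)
Your proposal is correct and follows the paper's proof essentially step for step. You obtain the descriptive claim from the fixed-policy evaluation theory and \(Q^{\pi_{k+1}}\ge Q^{\pi_k}\) from Lemma~\ref{lem:soft-policy-improvement}. You then get the statewise value inequality by chaining the pointwise \(Q\)-improvement with the optimality of \(\pi_{k+1}(\cdot\mid s)\) against the feasible candidate \(\pi_k(\cdot\mid s)\), and the objective inequality via \eqref{eq:maxent-objective-v-form}.

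Your integrability caveat is a fair addition. The paper tacitly assumes \(V^{\pi_j}\) and \(J(\pi_j;\rho)\) are finite. As you note, the joint first-moment bound \eqref{eq:soft-shift-integrability-assumption} does not by itself make the entropy terms finite, so stating this finiteness explicitly as part of the hypothesis is the right call.
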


\begin{proof}[Proof sketch]
The fixed-policy evaluation objects \(\mathcal Z^{\pi_k}\), \(F_{\mathcal Z^{\pi_k}}\), \(\nu^{\pi_k}\), and \(Q^{\pi_k}\) are provided by the theory developed in Sections~\ref{sec:cdf-soft-evaluation} and~\ref{sec:spectral-soft}. The \(Q\)-level improvement \(Q^{\pi_{k+1}}(s,a)\ge Q^{\pi_k}(s,a)\) for all \((s,a)\in\mathcal S\times\mathcal A\) is Lemma~\ref{lem:soft-policy-improvement}. The state value improvement \(V^{\pi_{k+1}}(s)\ge V^{\pi_k}(s)\) then follows from the defining optimality of \(\pi_{k+1}(\cdot\mid s)\) in \eqref{eq:soft-policy-improvement-step} with the \(Q\)-level inequality. The objective improvement \(J(\pi_{k+1};\rho)\ge J(\pi_k;\rho)\) follows from \eqref{eq:maxent-objective-v-form}. A full proof is given in Appendix~\ref{appendix:dspi-proof}.
\end{proof}

The preceding discussion places the evaluation theory developed in Sections~\ref{sec:cdf-soft-evaluation} and~\ref{sec:spectral-soft} within distributional soft policy iteration. For each current policy \(\pi_k\), these results provide the corresponding evaluation objects at the distributional, CDF, and spectral levels, together with the induced soft critic \(Q^{\pi_k}\) and state-value function \(V^{\pi_k}\). Combined with the standard maximum-entropy improvement step of Lemma~\ref{lem:soft-policy-improvement}, Proposition~\ref{prop:exact-evaluation-dspi} identifies the present contribution as the exact evaluation side of DSPI under the Cram\'er geometry.

This also clarifies the scope of the present paper. The analysis developed here concerns the unprojected evaluation component under the Cram\'er geometry. Questions involving projected or restricted critics, approximation error, and its propagation through repeated policy improvement are deferred to Section~\ref{sec:discussion}.

\section{Discussion}
\label{sec:discussion}
The preceding analysis establishes the exact policy evaluation theory of the distributional soft Bellman operator under the Cram\'er geometry. More precisely, for each fixed policy, the soft evaluation problem can be formulated at the distributional level, at the CDF level, and after spectral conjugation. The CDF level is where the Cram\'er metric is represented as an \(L^2\) distance between CDFs, while the spectral level gives an equivalent Hilbert-space representation of the same CDF-level dynamics. From this viewpoint, the present paper identifies the exact evaluation object that underlies Cram\'er-based distributional soft policy iteration.

This exactness also clarifies the scope of the present work. The results developed here concern the unprojected evaluation problem, where the Bellman target is represented without approximation and the analysis is carried out directly under the Cram\'er geometry, in which the CDF-level operator is contractive. The only role of the first-moment assumption is to ensure domain preservation: after applying the soft Bellman update, the resulting CDF field remains in the admissible CDF domain on which the Cram\'er distance is finite.\citep{geist19,sac1,dsacDuan_2022}.

A natural next step is therefore to study restricted or approximate critics in DSPI- and DSAC-style methods \citep{dsacDuan_2022,dsactDuan_2025}. From the perspective of algorithm design, the exact fixed point characterised in this paper may be interpreted as an oracle (distributional/CDF/spectral level) Bellman target for exact soft evaluation under the Cram\'er geometry. This oracle-target viewpoint provides a precise reference relative to which approximation questions can be formulated: whether a restricted critic remains close to the exact Bellman target, whether the induced evaluation error can be controlled, and whether one-step Bellman error bounds can be established. Moreover, because exact fixed-policy evaluation is contractive under the Cram\'er geometry, these questions at least arise within a well-defined Bellman evaluation framework. This is the sense in which the Cram\'er analysis is useful for approximate critics: it supplies a contractive reference problem before one introduces a particular critic class or distribution-matching loss. Objectives such as KL-based distribution matching may still be effective empirically, but their use as critic losses does not by itself give a Bellman-level contraction guarantee \citep{c51,drlbook_bellemare_distributional_2023,cramer-analysis}. In this sense, the present theory supplies a clearer analytical starting point for studying approximate critics. 

A further issue arises in practical actor-critic methods when the target itself is no longer treated as exact, but is instead affected by the current critic approximation. In that case, the problem is no longer only one of approximating a fixed oracle Bellman target, but also of understanding the coupled error generated by approximate evaluation and repeated policy improvement. The present paper does not address these control-level questions, but it provides the exact evaluation framework relative to which they can be formulated precisely.

More broadly, although the present analysis is developed under the Cram\'er geometry, the underlying viewpoint is not confined to this metric alone. One may similarly ask, under other distributional geometries, what the corresponding exact evaluation structure is, whether an intrinsic analytical level analogous to the CDF representation exists, and what class of Bellman-compatible critic parameterisations such a geometry would naturally suggest. In this sense, the contribution of the present paper is both specific and methodological: specific to Cram\'er-based distributional soft evaluation, and methodological in showing how the choice of distributional geometry can determine a representation in which the Bellman update and the metric used to analyse it are expressed directly.

\section{Conclusion}
\label{sec:conclusion}
In this paper, we established the policy evaluation theory of the distributional soft Bellman operator under the Cram\'er geometry. Working on an admissible CDF field domain, we formulated the induced CDF-level recursion and showed that the resulting operator is well defined and remains a \(\sqrt{\gamma}\)-contraction. Consequently, the associated distributional soft evaluation problem admits a unique fixed point together with convergent policy evaluation iterations. The first-moment condition on the one-step soft shift is used to ensure that the soft Bellman update maps the admissible CDF field domain into itself, so that the updated return distributions remain within the class on which the Cram\'er distance is finite \citep{geist19,sac1,dsacDuan_2022}. We then showed that the same evaluation problem admits an exact spectral realisation by conjugation. In this way, the distributional, CDF-level, and spectral formulations are linked as equivalent representations of the same soft Bellman evaluation problem. The role of the CDF formulation is to make explicit when the entropy-regularised Bellman target remains inside the finite-Cram\'er-distance domain, so that the \(\sqrt{\gamma}\)-contraction and fixed-point guarantees apply. This gives stable reference target before introducing critics or empirical distribution matching losses. In particular, the required condition is a first-moment bound on the combined one-step reward-entropy shift, rather than separate boundedness assumptions on the reward and entropy terms. Taken together, these results furnish the evaluation component of Cram\'er-based distributional soft policy iteration with a precise theoretical foundation. They also provide a rigorous reference point for later algorithmic developments under the same geometry, including restricted-critic analysis, Bellman-compatible critic losses, and critic parameterisations formulated at the distributional, CDF, or spectral level.

\acks{This publication has emanated from research supported in part by a grant from Taighde Éireann - Research Ireland under Grant number 18/CRT/6049. For the purpose of Open Access, the author has applied a CC BY public copyright licence to any Author Accepted Manuscript version arising from this submission.}

\bibliography{references}

\appendix

\section{Proofs details for the main text}
\label{app:omitted-proofs}

\subsection{Proof of Proposition~\ref{prop:soft-bellman-self-map}}
\label{app:proof-soft-self-map}
\begin{proof}
Let \(F_{\mathcal Z}\in\mathcal X^{\mathrm{cdf}}\). Fix \((s,a)\in\mathcal S\times\mathcal A\). For each \(r\in\mathbb R\) and \((s',a')\in\mathcal S\times\mathcal A\), consider the function
\[
x\mapsto
F_{\mathcal Z}(s',a')
\Bigl(
\frac{x-r}{\gamma}
+
\alpha\log\pi(a'\mid s')
\Bigr).
\]
Because \(F_{\mathcal Z}\in\mathcal X^{\mathrm{cdf}}\), by definition we have
\(F_{\mathcal Z}(s',a')\in\Gamma_F\) for every \((s',a')\in\mathcal S\times\mathcal A\).
Hence the map \(F_{\mathcal Z}(s',a'):\mathbb R\to[0,1]\) is non-decreasing, right-continuous, and satisfies
\[
\lim_{y\to-\infty}F_{\mathcal Z}(s',a')(y)=0,
\qquad
\lim_{y\to+\infty}F_{\mathcal Z}(s',a')(y)=1.
\]
Moreover, the affine map
\[
x\mapsto \frac{x-r}{\gamma}+\alpha\log\pi(a'\mid s')
\]
has positive slope \(1/\gamma>0\). Hence the composition
\[
x\mapsto
F_{\mathcal Z}(s',a')
\Bigl(
\frac{x-r}{\gamma}
+
\alpha\log\pi(a'\mid s')
\Bigr)
\]
is again non-decreasing and right-continuous, with limits
\[
\lim_{x\to-\infty}
F_{\mathcal Z}(s',a')
\Bigl(
\frac{x-r}{\gamma}
+
\alpha\log\pi(a'\mid s')
\Bigr)
=0,
\]
and
\[
\lim_{x\to+\infty}
F_{\mathcal Z}(s',a')
\Bigl(
\frac{x-r}{\gamma}
+
\alpha\log\pi(a'\mid s')
\Bigr)
=1.
\]

We next verify that the expectation in \eqref{eq:soft-bellman-cdf-definition} is also a CDF. Let \(x_1\le x_2\). Since the quantity inside the expectation is non-decreasing in \(x\), we have
\begin{align*}
&
F_{\mathcal Z}(s',a')
\Bigl(
\frac{x_1-r}{\gamma}
+
\alpha\log\pi(a'\mid s')
\Bigr)
\\
&\hspace{3cm}\le
F_{\mathcal Z}(s',a')
\Bigl(
\frac{x_2-r}{\gamma}
+
\alpha\log\pi(a'\mid s')
\Bigr)
\end{align*}
for all \(r\) and \((s',a')\). Taking expectation with respect to \(r\sim R(s,a)\) and \((s',a')\sim\mathcal P^\pi(\cdot\mid s,a)\) yields
\[
(\mathcal T^{\pi}_{\mathrm{cdf}}F_{\mathcal Z})(s,a)(x_1)
\le
(\mathcal T^{\pi}_{\mathrm{cdf}}F_{\mathcal Z})(s,a)(x_2),
\]
so \(x\mapsto (\mathcal T^{\pi}_{\mathrm{cdf}}F_{\mathcal Z})(s,a)(x)\) is non-decreasing.

Next, let \(x_n\downarrow x\). For each \(r\) and \((s',a')\), right-continuity of \(F_{\mathcal Z}(s',a')\) gives
\[
F_{\mathcal Z}(s',a')
\Bigl(
\frac{x_n-r}{\gamma}
+
\alpha\log\pi(a'\mid s')
\Bigr)
\to
F_{\mathcal Z}(s',a')
\Bigl(
\frac{x-r}{\gamma}
+
\alpha\log\pi(a'\mid s')
\Bigr).
\]
Since every CDF takes values in \([0,1]\), the quantity inside the expectation is bounded by \(1\). Dominated convergence therefore implies
\[
(\mathcal T^{\pi}_{\mathrm{cdf}}F_{\mathcal Z})(s,a)(x_n)
\to
(\mathcal T^{\pi}_{\mathrm{cdf}}F_{\mathcal Z})(s,a)(x),
\]
so \(x\mapsto (\mathcal T^{\pi}_{\mathrm{cdf}}F_{\mathcal Z})(s,a)(x)\) is right-continuous.

Finally, for each \(r\) and \((s',a')\),
\[
F_{\mathcal Z}(s',a')
\Bigl(
\frac{x-r}{\gamma}
+
\alpha\log\pi(a'\mid s')
\Bigr)
\to 0
\qquad\text{as }x\to-\infty,
\]
and
\[
F_{\mathcal Z}(s',a')
\Bigl(
\frac{x-r}{\gamma}
+
\alpha\log\pi(a'\mid s')
\Bigr)
\to 1
\qquad\text{as }x\to+\infty.
\]
Again using dominated convergence, we obtain
\[
\lim_{x\to-\infty}
(\mathcal T^{\pi}_{\mathrm{cdf}}F_{\mathcal Z})(s,a)(x)=0,
\qquad
\lim_{x\to+\infty}
(\mathcal T^{\pi}_{\mathrm{cdf}}F_{\mathcal Z})(s,a)(x)=1.
\]
Hence \(x\mapsto (\mathcal T^{\pi}_{\mathrm{cdf}}F_{\mathcal Z})(s,a)(x)\) is a CDF.

It remains to verify that it belongs to \(\Gamma_F\). For each \(r\) and \((s',a')\), the change of variable
\[
y=\frac{x-r}{\gamma}+\alpha\log\pi(a'\mid s')
\]
gives
\begin{align*}
&
\int_{\mathbb R}
\Biggl(
F_{\mathcal Z}(s',a')
\Bigl(
\frac{x-r}{\gamma}
+
\alpha\log\pi(a'\mid s')
\Bigr)
-
\mathbbm 1\{x\ge 0\}
\Biggr)^2
\,dx
\\
&\le
2\int_{\mathbb R}
\Biggl(
F_{\mathcal Z}(s',a')
\Bigl(
\frac{x-r}{\gamma}
+
\alpha\log\pi(a'\mid s')
\Bigr)
-
\mathbbm 1
\Bigl\{
\frac{x-r}{\gamma}
+
\alpha\log\pi(a'\mid s')
\ge 0
\Bigr\}
\Biggr)^2
\,dx
\\
&\qquad
+
2\int_{\mathbb R}
\Biggl(
\mathbbm 1
\Bigl\{
\frac{x-r}{\gamma}
+
\alpha\log\pi(a'\mid s')
\ge 0
\Bigr\}
-
\mathbbm 1\{x\ge 0\}
\Biggr)^2
\,dx
\\
&=
2\gamma
\int_{\mathbb R}
\bigl(
F_{\mathcal Z}(s',a')(y)-\mathbbm 1\{y\ge0\}
\bigr)^2\,dy
+
2\bigl|\,r-\gamma\alpha\log\pi(a'\mid s')\,\bigr|.
\end{align*}
Since \(F_{\mathcal Z}\in\mathcal X^{\mathrm{cdf}}\), we have
\[
\sup_{(u,v)\in\mathcal S\times\mathcal A}
d_C\bigl(\mathcal Z(u,v),\delta_0\bigr)<\infty.
\]
Hence, taking expectation with respect to \(r\sim R(s,a)\) and \((s',a')\sim\mathcal P^\pi(\cdot\mid s,a)\), we obtain
\begin{align*}
&
\int_{\mathbb R}
\Bigl(
(\mathcal T^{\pi}_{\mathrm{cdf}}F_{\mathcal Z})(s,a)(x)-\mathbbm 1\{x\ge0\}
\Bigr)^2
\,dx
\\
&=
\int_{\mathbb R}
\Biggl(
\mathbb E_{r\sim R(s,a),\,(s',a')\sim\mathcal P^\pi(\cdot\mid s,a)}
\Biggl[
F_{\mathcal Z}(s',a')
\Bigl(
\frac{x-r}{\gamma}
+
\alpha\log\pi(a'\mid s')
\Bigr)
\Biggr]
-
\mathbbm 1\{x\ge0\}
\Biggr)^2
\,dx
\\
&\le
\mathbb E_{r\sim R(s,a),\,(s',a')\sim\mathcal P^\pi(\cdot\mid s,a)}
\Biggl[
\int_{\mathbb R}
\Biggl(
F_{\mathcal Z}(s',a')
\Bigl(
\frac{x-r}{\gamma}
+
\alpha\log\pi(a'\mid s')
\Bigr)
-
\mathbbm 1\{x\ge0\}
\Biggr)^2
\,dx
\Biggr]
\\
&\le
2\gamma
\sup_{(u,v)\in\mathcal S\times\mathcal A}
d_C^2\bigl(\mathcal Z(u,v),\delta_0\bigr)
+
2\,
\mathbb E_{r\sim R(s,a),\,(s',a')\sim\mathcal P^\pi(\cdot\mid s,a)}
\bigl[
|\,r-\gamma\alpha\log\pi(a'\mid s')\,|
\bigr].
\end{align*}
By the definition of \(\mathcal X^{\mathrm{cdf}}\) and Assumption~\ref{ass:soft-policy-admissibility}, the right-hand side is bounded uniformly over \((s,a)\in\mathcal S\times\mathcal A\). Therefore
\[
\sup_{(s,a)\in\mathcal S\times\mathcal A}
\int_{\mathbb R}
\Bigl(
(\mathcal T^{\pi}_{\mathrm{cdf}}F_{\mathcal Z})(s,a)(x)-\mathbbm 1\{x\ge0\}
\Bigr)^2\,dx
<\infty.
\]
In particular,
\[
(\mathcal T^{\pi}_{\mathrm{cdf}}F_{\mathcal Z})(s,a)\in\Gamma_F
\qquad
\text{for all }(s,a)\in\mathcal S\times\mathcal A.
\]
Moreover, the required measurability of the map
\[
(s,a)\mapsto
(\mathcal T^{\pi}_{\mathrm{cdf}}F_{\mathcal Z})(s,a)(x)
\]
for each fixed \(x\in\mathbb R\) is exactly the content of Assumption~\ref{ass:soft-policy-admissibility}. Therefore $\mathcal T^{\pi}_{\mathrm{cdf}}F_{\mathcal Z}\in\mathcal X^{\mathrm{cdf}},$
and hence $\mathcal T^{\pi}_{\mathrm{cdf}}:\mathcal X^{\mathrm{cdf}}\to\mathcal X^{\mathrm{cdf}}.$
\end{proof}

\subsection{Proof of Proposition~\ref{prop:soft-cdf-consistency}}
\label{app:proof-soft-cdf-consistency}
\begin{proof}
Fix \((s,a)\in\mathcal S\times\mathcal A\) and \(x\in\mathbb R\). By \eqref{eq:dist_soft_bellman_operator},
\[
(\mathcal T^\pi_{DS}\mathcal Z)(s,a)
=
\mathrm{Law}\!\left(
R(s,a)+\gamma\bigl(Z'-\alpha\log\pi(a'\mid s')\bigr)
\right),
\]
where \((s',a')\sim\mathcal P^\pi(\cdot\mid s,a)\) and, conditionally on \((s',a')\), the random variable \(Z'\) has law \(\mathcal Z(s',a')\). Therefore
\begin{align*}
F_{\mathcal T^\pi_{DS}\mathcal Z}(s,a)(x)
&=
\mathbb P\!\left(
R(s,a)+\gamma\bigl(Z'-\alpha\log\pi(a'\mid s')\bigr)\le x
\right)\\
&=
\mathbb E_{r\sim R(s,a),\,(s',a')\sim\mathcal P^\pi(\cdot\mid s,a)}
\Biggl[
\mathbb P\!\left(
Z'
\le
\frac{x-r}{\gamma}
+
\alpha\log\pi(a'\mid s')
\;\middle|\;
(s',a')
\right)
\Biggr].
\end{align*}
Since \(Z'\mid(s',a')\sim\mathcal Z(s',a')\), the inner probability equals
\[
F_{\mathcal Z}(s',a')
\Bigl(
\frac{x-r}{\gamma}
+
\alpha\log\pi(a'\mid s')
\Bigr).
\]
Substituting this into the previous expression yields
\[
F_{\mathcal T^\pi_{DS}\mathcal Z}(s,a)(x)
=
\mathbb E_{r\sim R(s,a),\,(s',a')\sim\mathcal P^\pi(\cdot\mid s,a)}
\Biggl[
F_{\mathcal Z}(s',a')
\Bigl(
\frac{x-r}{\gamma}
+
\alpha\log\pi(a'\mid s')
\Bigr)
\Biggr]
=
(\mathcal T^{\pi}_{\mathrm{cdf}}F_{\mathcal Z})(s,a)(x),
\]
which proves Proposition~\ref{prop:soft-cdf-consistency}.
\end{proof}

\subsection{Proof of Theorem~\ref{thm:soft-contraction}}
\label{app:proof-soft-contraction}
\begin{proof}
Fix \((s,a)\in\mathcal S\times\mathcal A\). By Definition~\ref{def:cdf-soft-bellman-operator},
\begin{align*}
&(\mathcal T^{\pi}_{\mathrm{cdf}}F_{\mathcal Z_1})(s,a)(x)
-
(\mathcal T^{\pi}_{\mathrm{cdf}}F_{\mathcal Z_2})(s,a)(x) \\
&=
\mathbb E_{r\sim R(s,a),\,(s',a')\sim\mathcal P^\pi(\cdot\mid s,a)}
\Biggl[
F_{\mathcal Z_1}(s',a')
\Bigl(
\frac{x-r}{\gamma}
+
\alpha\log\pi(a'\mid s')
\Bigr)
-
F_{\mathcal Z_2}(s',a')
\Bigl(
\frac{x-r}{\gamma}
+
\alpha\log\pi(a'\mid s')
\Bigr)
\Biggr].
\end{align*}
Therefore,
\begin{align*}
&d_C^2\bigl(
(\mathcal T^{\pi}_{\mathrm{cdf}}F_{\mathcal Z_1})(s,a),
(\mathcal T^{\pi}_{\mathrm{cdf}}F_{\mathcal Z_2})(s,a)
\bigr) \\
&=
\int_{\mathbb R}
\Biggl|
\mathbb E_{r\sim R(s,a),\,(s',a')\sim\mathcal P^\pi(\cdot\mid s,a)}
\Biggl[
F_{\mathcal Z_1}(s',a')
\Bigl(
\frac{x-r}{\gamma}
+
\alpha\log\pi(a'\mid s')
\Bigr)
-
F_{\mathcal Z_2}(s',a')
\Bigl(
\frac{x-r}{\gamma}
+
\alpha\log\pi(a'\mid s')
\Bigr)
\Biggr]
\Biggr|^2
dx.
\end{align*}
Applying Jensen's inequality to the inner expectation yields
\begin{align*}
&d_C^2\bigl(
(\mathcal T^{\pi}_{\mathrm{cdf}}F_{\mathcal Z_1})(s,a),
(\mathcal T^{\pi}_{\mathrm{cdf}}F_{\mathcal Z_2})(s,a)
\bigr) \\
&\le
\mathbb E_{r\sim R(s,a),\,(s',a')\sim\mathcal P^\pi(\cdot\mid s,a)}
\int_{\mathbb R}
\Biggl|
F_{\mathcal Z_1}(s',a')
\Bigl(
\frac{x-r}{\gamma}
+
\alpha\log\pi(a'\mid s')
\Bigr)
-
F_{\mathcal Z_2}(s',a')
\Bigl(
\frac{x-r}{\gamma}
+
\alpha\log\pi(a'\mid s')
\Bigr)
\Biggr|^2
dx.
\end{align*}
Now fix \(r\) and \((s',a')\), and make the change of variable
\[
y=
\frac{x-r}{\gamma}
+
\alpha\log\pi(a'\mid s'),
\qquad
dx=\gamma\,dy.
\]
Then
\begin{align*}
&\int_{\mathbb R}
\Biggl|
F_{\mathcal Z_1}(s',a')
\Bigl(
\frac{x-r}{\gamma}
+
\alpha\log\pi(a'\mid s')
\Bigr)
-
F_{\mathcal Z_2}(s',a')
\Bigl(
\frac{x-r}{\gamma}
+
\alpha\log\pi(a'\mid s')
\Bigr)
\Biggr|^2
dx \\
&=
\gamma
\int_{\mathbb R}
\bigl|
F_{\mathcal Z_1}(s',a')(y)-F_{\mathcal Z_2}(s',a')(y)
\bigr|^2dy \\
&=
\gamma\,d_C^2\bigl(F_{\mathcal Z_1}(s',a'),F_{\mathcal Z_2}(s',a')\bigr).
\end{align*}
Substituting this into the previous inequality gives
\begin{align*}
d_C^2\bigl(
(\mathcal T^{\pi}_{\mathrm{cdf}}F_{\mathcal Z_1})(s,a),
(\mathcal T^{\pi}_{\mathrm{cdf}}F_{\mathcal Z_2})(s,a)
\bigr)
&\le
\gamma\,
\mathbb E_{r\sim R(s,a),\,(s',a')\sim\mathcal P^\pi(\cdot\mid s,a)}
d_C^2\bigl(F_{\mathcal Z_1}(s',a'),F_{\mathcal Z_2}(s',a')\bigr) \\
&\le
\gamma\,d_{\mathrm{Cr}}^2(F_{\mathcal Z_1},F_{\mathcal Z_2}).
\end{align*}
Since \((s,a)\) was arbitrary, taking the supremum over \((s,a)\in\mathcal S\times\mathcal A\) and then square roots yields
\[
d_{\mathrm{Cr}}
\bigl(
\mathcal T^{\pi}_{\mathrm{cdf}}F_{\mathcal Z_1},
\mathcal T^{\pi}_{\mathrm{cdf}}F_{\mathcal Z_2}
\bigr)
\le
\sqrt{\gamma}\,d_{\mathrm{Cr}}(F_{\mathcal Z_1},F_{\mathcal Z_2}),
\]
which proves $d_{\mathrm{Cr}}
\bigl(
\mathcal T^{\pi}_{\mathrm{cdf}}F_{\mathcal Z_1},
\mathcal T^{\pi}_{\mathrm{cdf}}F_{\mathcal Z_2}
\bigr)
\le
\sqrt{\gamma}\,d_{\mathrm{Cr}}(F_{\mathcal Z_1},F_{\mathcal Z_2}).$
\end{proof}

\subsection{Proof of Proposition~\ref{prop:exact-evaluation-dspi}}
\label{appendix:dspi-proof}
\begin{proof}
The first claim follows directly from the evaluation theory developed in Sections~\ref{sec:cdf-soft-evaluation} and~\ref{sec:spectral-soft}. For each policy \(\pi_k\), this theory provides the evaluation objects \(\mathcal Z^{\pi_k}\), \(F_{\mathcal Z^{\pi_k}}\), and \(\nu^{\pi_k}\), together with the induced soft critic \(Q^{\pi_k}\). Defining \(\pi_{k+1}\) by \eqref{eq:soft-policy-improvement-step} therefore yields the stated distributional soft policy-iteration scheme with policy evaluation under the Cram\'er geometry.

The inequality
\[
Q^{\pi_{k+1}}(s,a)\ge Q^{\pi_k}(s,a),
\qquad
\forall (s,a)\in\mathcal S\times\mathcal A,
\]
is exactly the conclusion of Lemma~\ref{lem:soft-policy-improvement}.

Recall the soft state-value function defined in Section~\ref{sec:dspi-perspective}:
\[
V^{\pi_j}(s)
:=
\mathbb E_{a\sim\pi_j(\cdot\mid s)}
\bigl[
Q^{\pi_j}(s,a)-\alpha\log\pi_j(a\mid s)
\bigr],
\qquad j\in\{k,k+1\}.
\]
For every \(s\in\mathcal S\), the defining optimality of \(\pi_{k+1}\) in \eqref{eq:soft-policy-improvement-step} gives
\[
\mathbb E_{a\sim\pi_{k+1}(\cdot\mid s)}
\bigl[
Q^{\pi_k}(s,a)-\alpha\log\pi_{k+1}(a\mid s)
\bigr]
\ge
\mathbb E_{a\sim\pi_k(\cdot\mid s)}
\bigl[
Q^{\pi_k}(s,a)-\alpha\log\pi_k(a\mid s)
\bigr].
\]
Since \(Q^{\pi_{k+1}}(s,a)\ge Q^{\pi_k}(s,a)\) for all \((s,a)\), we also have
\[
\mathbb E_{a\sim\pi_{k+1}(\cdot\mid s)}
\bigl[
Q^{\pi_{k+1}}(s,a)-\alpha\log\pi_{k+1}(a\mid s)
\bigr]
\ge
\mathbb E_{a\sim\pi_{k+1}(\cdot\mid s)}
\bigl[
Q^{\pi_k}(s,a)-\alpha\log\pi_{k+1}(a\mid s)
\bigr].
\]
Combining the last two displays yields
\[
V^{\pi_{k+1}}(s)\ge V^{\pi_k}(s),
\qquad
\forall s\in\mathcal S.
\]
Finally, by \eqref{eq:maxent-objective-v-form},
\[
J(\pi_j;\rho)=\mathbb E_{S_t\sim\rho}\bigl[V^{\pi_j}(S_t)\bigr],
\qquad
j\in\{k,k+1\}.
\]
It follows that
\[
J(\pi_{k+1};\rho)
=
\mathbb E_{S_t\sim\rho}\bigl[V^{\pi_{k+1}}(S_t)\bigr]
\ge
\mathbb E_{S_t\sim\rho}\bigl[V^{\pi_k}(S_t)\bigr]
=
J(\pi_k;\rho).
\]
\end{proof}

\section{Further direct results on the spectral domain}
\label{app:spectral-consequences}

The core spectral statements in the main text concern transport of the distributional soft Bellman dynamics, together with the induced correspondence of fixed points and iterates. For completeness, we record here several further consequences of that transport when \(\mathcal X^{\mathrm{adm}}\) is equipped with the induced metric \(\widetilde d_{\mathrm{Cr}}\). These results are direct analogues of the CDF-level contraction, fixed-point, and iterate-convergence statements, but are not needed for the main line of analysis.

\subsection{Spectral contraction under the induced metric}
\label{app:spectral-contraction}

We first note that the CDF-level contraction property transports directly to the spectral domain once the induced metric is used.

\begin{proposition}[Spectral contraction under the induced metric]
\label{prop:spectral-soft-contraction}
Suppose Assumption~\ref{ass:soft-policy-admissibility} holds. Then \(\mathcal T^{\pi}\) is a \(\sqrt{\gamma}\)-contraction on \((\mathcal X^{\mathrm{adm}},\widetilde d_{\mathrm{Cr}})\). More precisely, for any \(\nu_1,\nu_2\in\mathcal X^{\mathrm{adm}}\),
\begin{equation}
\label{eq:spectral-soft-contraction}
\widetilde d_{\mathrm{Cr}}
\bigl(
\mathcal T^{\pi}\nu_1,
\mathcal T^{\pi}\nu_2
\bigr)
\le
\sqrt{\gamma}\,\widetilde d_{\mathrm{Cr}}(\nu_1,\nu_2).
\end{equation}
\end{proposition}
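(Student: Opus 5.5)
The plan is to transport Theorem~\ref{thm:soft-contraction} to $\mathcal X^{\mathrm{adm}}$ by unwinding the definitions of $\mathcal T^{\pi}$ and $\widetilde d_{\mathrm{Cr}}$; no new analytic estimate is needed.

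Fix $\nu_1,\nu_2\in\mathcal X^{\mathrm{adm}}$. Since $\mathcal X^{\mathrm{adm}}=\mathrm{range}(\mathbb V)$ and $\mathbb V^{-1}$ is defined pointwise on this range, I would set
\[
F_{\mathcal Z_i}:=\mathbb V^{-1}\nu_i\in\mathcal X^{\mathrm{cdf}},\qquad i=1,2,
\]
so that $\nu_i=\mathbb V F_{\mathcal Z_i}$.

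By Definition~\ref{def:spectral-soft-bellman}, $\mathcal T^{\pi}\nu_i=\mathbb V\,\mathcal T^{\pi}_{\mathrm{cdf}}F_{\mathcal Z_i}$. Proposition~\ref{prop:soft-bellman-self-map} guarantees $\mathcal T^{\pi}_{\mathrm{cdf}}F_{\mathcal Z_i}\in\mathcal X^{\mathrm{cdf}}$, so $\mathcal T^{\pi}\nu_i\in\mathcal X^{\mathrm{adm}}$ and $\widetilde d_{\mathrm{Cr}}$ may be evaluated on it. Applying $\mathbb V^{-1}$ then gives
\[
\mathbb V^{-1}\mathcal T^{\pi}\nu_i=\mathcal T^{\pi}_{\mathrm{cdf}}F_{\mathcal Z_i}.
\]
This step uses $\mathbb V^{-1}\mathbb V=\mathrm{id}$ on $\mathcal X^{\mathrm{cdf}}$, which holds because the centring map $C$ and $\mathcal U$ are both injective.

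Substituting into \eqref{eq:spectral-induced-metric}, I obtain
\[
\widetilde d_{\mathrm{Cr}}(\mathcal T^{\pi}\nu_1,\mathcal T^{\pi}\nu_2)=d_{\mathrm{Cr}}\bigl(\mathcal T^{\pi}_{\mathrm{cdf}}F_{\mathcal Z_1},\mathcal T^{\pi}_{\mathrm{cdf}}F_{\mathcal Z_2}\bigr).
\]
Theorem~\ref{thm:soft-contraction} bounds the right-hand side by $\sqrt{\gamma}\,d_{\mathrm{Cr}}(F_{\mathcal Z_1},F_{\mathcal Z_2})$. By definition of the induced metric, this quantity equals $\sqrt{\gamma}\,\widetilde d_{\mathrm{Cr}}(\nu_1,\nu_2)$, which is \eqref{eq:spectral-soft-contraction}.

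The only point needing care is the bijectivity bookkeeping: $\widetilde d_{\mathrm{Cr}}$ must be a genuine metric on $\mathcal X^{\mathrm{adm}}$, and the identity $\mathbb V^{-1}\mathcal T^{\pi}=\mathcal T^{\pi}_{\mathrm{cdf}}\mathbb V^{-1}$ must hold. Both follow from the injectivity of $\mathbb V$ on $\mathcal X^{\mathrm{cdf}}$ noted in Section~\ref{sec:spectral-soft}, so I do not expect a real obstacle.
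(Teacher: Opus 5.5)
Your proposal is correct and follows the paper's proof essentially step for step: you pull back $\nu_i=\mathbb V F_{\mathcal Z_i}$, use Definition~\ref{def:spectral-soft-bellman} to identify $\mathbb V^{-1}(\mathcal T^{\pi}\nu_i)$ with $\mathcal T^{\pi}_{\mathrm{cdf}}F_{\mathcal Z_i}$, and combine the definition of $\widetilde d_{\mathrm{Cr}}$ with Theorem~\ref{thm:soft-contraction}. Your extra bookkeeping is sound: you invoke Proposition~\ref{prop:soft-bellman-self-map} so that $\mathcal T^{\pi}\nu_i\in\mathcal X^{\mathrm{adm}}$, and you justify injectivity of $\mathbb V$ via $C$ and $\mathcal U$; this just makes explicit what the paper takes from the remarks after Definition~\ref{def:spectral-soft-bellman}.
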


\begin{proof}
Let \(\nu_1,\nu_2\in\mathcal X^{\mathrm{adm}}\). Since \(\mathbb V:\mathcal X^{\mathrm{cdf}}\to\mathcal X^{\mathrm{adm}}\) is bijective, there exist unique \(F_{\mathcal Z_1},F_{\mathcal Z_2}\in\mathcal X^{\mathrm{cdf}}\) such that
\[
\nu_1=\mathbb V F_{\mathcal Z_1},
\qquad
\nu_2=\mathbb V F_{\mathcal Z_2}.
\]
By Definition~\ref{def:spectral-soft-bellman},
\[
\mathbb V^{-1}(\mathcal T^{\pi}\nu_1)
=
\mathcal T^{\pi}_{\mathrm{cdf}}F_{\mathcal Z_1},
\qquad
\mathbb V^{-1}(\mathcal T^{\pi}\nu_2)
=
\mathcal T^{\pi}_{\mathrm{cdf}}F_{\mathcal Z_2}.
\]
Therefore, by the definition of the induced metric \(\widetilde d_{\mathrm{Cr}}\),
\begin{align*}
\widetilde d_{\mathrm{Cr}}
\bigl(
\mathcal T^{\pi}\nu_1,
\mathcal T^{\pi}\nu_2
\bigr)
&=
d_{\mathrm{Cr}}
\bigl(
\mathbb V^{-1}(\mathcal T^{\pi}\nu_1),
\mathbb V^{-1}(\mathcal T^{\pi}\nu_2)
\bigr) \\
&=
d_{\mathrm{Cr}}
\bigl(
\mathcal T^{\pi}_{\mathrm{cdf}}F_{\mathcal Z_1},
\mathcal T^{\pi}_{\mathrm{cdf}}F_{\mathcal Z_2}
\bigr).
\end{align*}
Applying Theorem~\ref{thm:soft-contraction}, we obtain
\[
d_{\mathrm{Cr}}
\bigl(
\mathcal T^{\pi}_{\mathrm{cdf}}F_{\mathcal Z_1},
\mathcal T^{\pi}_{\mathrm{cdf}}F_{\mathcal Z_2}
\bigr)
\le
\sqrt{\gamma}\,d_{\mathrm{Cr}}(F_{\mathcal Z_1},F_{\mathcal Z_2}).
\]
Using again the definition of \(\widetilde d_{\mathrm{Cr}}\), this becomes
\[
\widetilde d_{\mathrm{Cr}}
\bigl(
\mathcal T^{\pi}\nu_1,
\mathcal T^{\pi}\nu_2
\bigr)
\le
\sqrt{\gamma}\,\widetilde d_{\mathrm{Cr}}(\nu_1,\nu_2),
\]
which proves \eqref{eq:spectral-soft-contraction}.
\end{proof}

This transported contraction property allows the Banach fixed-point argument to be reproduced on the spectral domain, once completeness of the induced metric space is noted.

\subsection{Spectral fixed-point uniqueness under the induced metric}
\label{app:spectral-fixed-point-uniqueness}

We next verify that the transported spectral space is complete under \(\widetilde d_{\mathrm{Cr}}\), and hence supports the same fixed-point argument as at the CDF level.

\begin{lemma}[Completeness of the transported spectral space]
\label{lem:spectral-space-complete}
The metric space \((\mathcal X^{\mathrm{adm}},\widetilde d_{\mathrm{Cr}})\) is complete.
\end{lemma}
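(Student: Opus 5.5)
The plan is to show that $\mathbb V:(\mathcal X^{\mathrm{cdf}},d_{\mathrm{Cr}})\to(\mathcal X^{\mathrm{adm}},\widetilde d_{\mathrm{Cr}})$ is an isometric bijection, and then to pull completeness back from Lemma~\ref{lem:cdf-space-complete}. First I record that $\widetilde d_{\mathrm{Cr}}$ in \eqref{eq:spectral-induced-metric} is a genuine metric on $\mathcal X^{\mathrm{adm}}$. Symmetry and the triangle inequality are inherited directly from $d_{\mathrm{Cr}}$. Definiteness uses injectivity of $\mathbb V^{-1}$ on $\mathcal X^{\mathrm{adm}}$. This injectivity holds pointwise because $\mathcal V=\mathcal U\circ C$: the centring map $C$ is injective, and $\mathcal U$ is injective since $H$ is recovered from $\widehat{\mathcal U H}$ by dividing by $-\sqrt{2\pi}\,i\omega$ off the null set $\{\omega=0\}$, conjugating, and inverting the Fourier transform.

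By construction, $\widetilde d_{\mathrm{Cr}}(\mathbb V F_1,\mathbb V F_2)=d_{\mathrm{Cr}}(F_1,F_2)$ for all $F_1,F_2\in\mathcal X^{\mathrm{cdf}}$. Moreover, $\mathbb V$ is onto $\mathcal X^{\mathrm{adm}}=\mathrm{range}(\mathbb V)$, as noted after Definition~\ref{def:spectral-soft-bellman}. Hence $\mathbb V$ is a surjective isometry.

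Completeness then transfers directly. Let $\{\nu_n\}$ be $\widetilde d_{\mathrm{Cr}}$-Cauchy in $\mathcal X^{\mathrm{adm}}$, and set $F_n:=\mathbb V^{-1}\nu_n\in\mathcal X^{\mathrm{cdf}}$. Since $d_{\mathrm{Cr}}(F_n,F_m)=\widetilde d_{\mathrm{Cr}}(\nu_n,\nu_m)$, the sequence $\{F_n\}$ is Cauchy in $(\mathcal X^{\mathrm{cdf}},d_{\mathrm{Cr}})$. By Lemma~\ref{lem:cdf-space-complete} it converges to some $F\in\mathcal X^{\mathrm{cdf}}$. Put $\nu:=\mathbb V F$. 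This lies in $\mathcal X^{\mathrm{adm}}$ by \eqref{eq:xadm_def_softpaper}, and
\[
\widetilde d_{\mathrm{Cr}}(\nu_n,\nu)=d_{\mathrm{Cr}}(\mathbb V^{-1}\nu_n,\mathbb V^{-1}\nu)=d_{\mathrm{Cr}}(F_n,F)\to0 .
\]
So every Cauchy sequence converges in $\mathcal X^{\mathrm{adm}}$, which proves the lemma.

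The only step needing care is well-definedness of $\mathbb V^{-1}$, that is, pointwise injectivity of $\mathcal V$; everything else is formal bookkeeping. I would handle this step with the Fourier-side argument above, and otherwise appeal to the statement in Section~\ref{subsec:spectral-transport} that the inverse is well defined on the range. Note that the argument never uses completeness of $\mathcal H_\epsilon$ or of $\mathcal X^{\mathrm{spec}}$. The induced metric is generally not equivalent to $\|\cdot\|_{\infty,\epsilon}$, so all completeness must come from the CDF side.
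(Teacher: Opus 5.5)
Your proof is correct and follows the paper's argument essentially verbatim: you pull a $\widetilde d_{\mathrm{Cr}}$-Cauchy sequence back through $\mathbb V^{-1}$, invoke Lemma~\ref{lem:cdf-space-complete}, and push the limit forward via $\mathbb V$. Your extra check that $\widetilde d_{\mathrm{Cr}}$ is a genuine metric, via pointwise injectivity of $\mathcal V=\mathcal U\circ C$, is a harmless supplement to what the paper takes as given in Section~\ref{subsec:spectral-transport}.
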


\begin{proof}
Let \(\{\nu_n\}_{n\ge1}\subset \mathcal X^{\mathrm{adm}}\) be a Cauchy sequence with respect to \(\widetilde d_{\mathrm{Cr}}\). By definition of the induced metric,
\[
\widetilde d_{\mathrm{Cr}}(\nu_n,\nu_m)
=
d_{\mathrm{Cr}}(\mathbb V^{-1}\nu_n,\mathbb V^{-1}\nu_m)
\qquad\text{for all }n,m\ge1.
\]
Hence \(\{\mathbb V^{-1}\nu_n\}_{n\ge1}\subset \mathcal X^{\mathrm{cdf}}\) is a Cauchy sequence in \((\mathcal X^{\mathrm{cdf}},d_{\mathrm{Cr}})\). By Lemma~\ref{lem:cdf-space-complete}, there exists \(F\in\mathcal X^{\mathrm{cdf}}\) such that
\[
d_{\mathrm{Cr}}(\mathbb V^{-1}\nu_n,F)\to 0.
\]
Define
\[
\nu:=\mathbb V F\in\mathcal X^{\mathrm{adm}}.
\]
Then, again by the definition of \(\widetilde d_{\mathrm{Cr}}\),
\[
\widetilde d_{\mathrm{Cr}}(\nu_n,\nu)
=
d_{\mathrm{Cr}}(\mathbb V^{-1}\nu_n,\mathbb V^{-1}\nu)
=
d_{\mathrm{Cr}}(\mathbb V^{-1}\nu_n,F)\to 0.
\]
Thus \(\nu_n\to \nu\) in \((\mathcal X^{\mathrm{adm}},\widetilde d_{\mathrm{Cr}})\). Therefore \((\mathcal X^{\mathrm{adm}},\widetilde d_{\mathrm{Cr}})\) is complete.
\end{proof}

Combining completeness with Proposition~\ref{prop:spectral-soft-contraction} yields the spectral fixed-point result in Banach form.

\begin{corollary}[Unique spectral fixed point under the induced metric]
\label{cor:spectral-fixed-point-induced-metric}
Suppose Assumption~\ref{ass:soft-policy-admissibility} holds. Then \(\mathcal T^{\pi}\) admits a unique fixed point in \((\mathcal X^{\mathrm{adm}},\widetilde d_{\mathrm{Cr}})\). Equivalently, the element
\[
\nu^\pi=\mathbb V F_{\mathcal Z^\pi}
\]
from Corollary~\ref{cor:spectral-soft-fixed-point} is the unique fixed point of \(\mathcal T^{\pi}\) on \(\mathcal X^{\mathrm{adm}}\).
\end{corollary}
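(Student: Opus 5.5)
The plan is to apply the Banach fixed-point theorem on the transported space $(\mathcal X^{\mathrm{adm}},\widetilde d_{\mathrm{Cr}})$, exactly as Proposition~\ref{prop:soft-fixed-point} does at the CDF level. Then I will identify the resulting fixed point with $\nu^\pi=\mathbb V F_{\mathcal Z^\pi}$.

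\textbf{Self-map.} First I will check that $\mathcal T^{\pi}$ maps $\mathcal X^{\mathrm{adm}}$ into itself. Take $\nu\in\mathcal X^{\mathrm{adm}}$. Then $\mathbb V^{-1}\nu\in\mathcal X^{\mathrm{cdf}}$, and by Proposition~\ref{prop:soft-bellman-self-map} we have $\mathcal T^{\pi}_{\mathrm{cdf}}\mathbb V^{-1}\nu\in\mathcal X^{\mathrm{cdf}}$. Applying $\mathbb V$ lands in $\mathrm{range}(\mathbb V)=\mathcal X^{\mathrm{adm}}$, by Definition~\ref{def:spectral-soft-bellman}.

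\textbf{Metric and completeness.} I will note that $\widetilde d_{\mathrm{Cr}}$ is a genuine metric. This holds because $\mathbb V^{-1}$ is injective on $\mathcal X^{\mathrm{adm}}$: it is the inverse of a bijection. Hence $\widetilde d_{\mathrm{Cr}}(\nu_1,\nu_2)=0$ forces $\mathbb V^{-1}\nu_1=\mathbb V^{-1}\nu_2$, and so $\nu_1=\nu_2$. Completeness is Lemma~\ref{lem:spectral-space-complete}.

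\textbf{Existence and uniqueness.} Proposition~\ref{prop:spectral-soft-contraction} gives that $\mathcal T^{\pi}$ is a $\sqrt{\gamma}$-contraction with $\sqrt\gamma<1$. The Banach fixed-point theorem then gives a unique $\nu^\star\in\mathcal X^{\mathrm{adm}}$ with $\mathcal T^{\pi}\nu^\star=\nu^\star$.

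\textbf{Identification with $\nu^\pi$.} For the ``equivalently'' part, I will show that $\nu^\star=\mathbb V F^\star$, where $F^\star$ is the CDF-level fixed point of Proposition~\ref{prop:soft-fixed-point}. This follows from conjugation: $\mathcal T^{\pi}(\mathbb V F^\star)=\mathbb V\mathcal T^{\pi}_{\mathrm{cdf}}F^\star=\mathbb V F^\star$ by Proposition~\ref{prop:spectral-soft-decomposition}, and uniqueness does the rest. To write this as $\nu^\pi=\mathbb V F_{\mathcal Z^\pi}$, I need $F^\star=F_{\mathcal Z^\pi}$. That equality requires Assumption~\ref{assump:soft-xcdf-domain}, through Corollary~\ref{cor:soft-fixed-point-correspondence} and the uniqueness remark after Proposition~\ref{prop:soft-fixed-point}. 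Alternatively, I can simply invoke Corollary~\ref{cor:spectral-soft-fixed-point} under that assumption.

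\textbf{Main obstacle.} The only delicate point is this bookkeeping. The corollary's hypothesis lists only Assumption~\ref{ass:soft-policy-admissibility}, but the identification with $F_{\mathcal Z^\pi}$ implicitly uses Assumption~\ref{assump:soft-xcdf-domain}. I would state the identification as holding whenever $\nu^\pi$ is defined, which already presupposes $F_{\mathcal Z^\pi}\in\mathcal X^{\mathrm{cdf}}$.
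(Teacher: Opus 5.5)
Your proposal is correct and matches the paper's proof: contraction from Proposition~\ref{prop:spectral-soft-contraction}, completeness from Lemma~\ref{lem:spectral-space-complete}, the Banach fixed-point theorem for existence and uniqueness, and Corollary~\ref{cor:spectral-soft-fixed-point} to identify the fixed point with $\nu^\pi=\mathbb V F_{\mathcal Z^\pi}$. Your bookkeeping remark is also accurate. The paper's identification step goes through Corollary~\ref{cor:spectral-soft-fixed-point}, which assumes Assumption~\ref{assump:soft-xcdf-domain}, so the stated hypothesis (Assumption~\ref{ass:soft-policy-admissibility} alone) covers only the abstract existence-and-uniqueness half.
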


\begin{proof}
By Proposition~\ref{prop:spectral-soft-contraction}, \(\mathcal T^{\pi}\) is a contraction on \((\mathcal X^{\mathrm{adm}},\widetilde d_{\mathrm{Cr}})\). By Lemma~\ref{lem:spectral-space-complete}, this metric space is complete. The Banach fixed-point theorem therefore yields a unique fixed point of \(\mathcal T^{\pi}\) in \(\mathcal X^{\mathrm{adm}}\). By Corollary~\ref{cor:spectral-soft-fixed-point}, this fixed point is precisely \(\nu^\pi=\mathbb V F_{\mathcal Z^\pi}\).
\end{proof}

With the spectral fixed point thus identified, the convergence of spectral iterates follows in the standard way.

\subsection{Spectral iterate convergence under the induced metric}
\label{app:spectral-iterate-convergence}

We finally record the corresponding geometric convergence of the spectral soft policy evaluation iterates under the induced metric.

\begin{corollary}[Spectral iterate convergence under the induced metric]
\label{cor:spectral-iterate-induced-metric}
Suppose Assumption~\ref{ass:soft-policy-admissibility} holds. Let \(\nu_0\in\mathcal X^{\mathrm{adm}}\) be arbitrary, and define the spectral iterates by
\begin{equation}
\label{eq:appendix-spectral-iterate-definition}
\nu_{k+1}
=
\mathcal T^{\pi}\nu_k,
\qquad k\ge 0.
\end{equation}
Then \(\nu_k\to \nu^\pi\) in \((\mathcal X^{\mathrm{adm}},\widetilde d_{\mathrm{Cr}})\), where \(\nu^\pi\) is the unique fixed point from Corollary~\ref{cor:spectral-fixed-point-induced-metric}. More precisely, for every \(k\ge0\),
\begin{equation}
\label{eq:appendix-spectral-iterate-error-bound}
\widetilde d_{\mathrm{Cr}}(\nu_k,\nu^\pi)
\le
(\sqrt{\gamma})^k\,\widetilde d_{\mathrm{Cr}}(\nu_0,\nu^\pi).
\end{equation}
\end{corollary}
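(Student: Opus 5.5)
The proof follows the one-step contraction argument used for Corollary~\ref{cor:soft-iterate-convergence}, now carried out in the spectral domain with the induced metric $\widetilde d_{\mathrm{Cr}}$.

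First, by Corollary~\ref{cor:spectral-fixed-point-induced-metric}, $\nu^\pi=\mathbb V F_{\mathcal Z^\pi}$ is the unique fixed point of $\mathcal T^{\pi}$ on $\mathcal X^{\mathrm{adm}}$, so $\mathcal T^{\pi}\nu^\pi=\nu^\pi$. Since $\mathcal T^{\pi}$ is a self-map on $\mathcal X^{\mathrm{adm}}$ (as noted after Definition~\ref{def:spectral-soft-bellman}), the iterates $\nu_k$ defined by \eqref{eq:appendix-spectral-iterate-definition} all lie in $\mathcal X^{\mathrm{adm}}$. Hence $\widetilde d_{\mathrm{Cr}}(\nu_k,\nu^\pi)$ is well defined and finite. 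It is finite because $\mathbb V^{-1}\nu_k$ and $F_{\mathcal Z^\pi}$ both lie in $\mathcal X^{\mathrm{cdf}}$, and $d_{\mathrm{Cr}}$ is finite there by the triangle inequality through the uniform bound against $\delta_0$.

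Next, I apply Proposition~\ref{prop:spectral-soft-contraction} to the pair $\nu_k,\nu^\pi$:
\[
\widetilde d_{\mathrm{Cr}}(\nu_{k+1},\nu^\pi)
=\widetilde d_{\mathrm{Cr}}(\mathcal T^{\pi}\nu_k,\mathcal T^{\pi}\nu^\pi)
\le\sqrt{\gamma}\,\widetilde d_{\mathrm{Cr}}(\nu_k,\nu^\pi).
\]
Induction on $k$, with the trivial base case $k=0$, then gives \eqref{eq:appendix-spectral-iterate-error-bound}. Since $\sqrt{\gamma}<1$ and $\widetilde d_{\mathrm{Cr}}(\nu_0,\nu^\pi)<\infty$, the right-hand side tends to $0$, so $\nu_k\to\nu^\pi$ in $(\mathcal X^{\mathrm{adm}},\widetilde d_{\mathrm{Cr}})$.

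An alternative route pulls everything back through $\mathbb V^{-1}$. Setting $F_{\mathcal Z_k}:=\mathbb V^{-1}\nu_k$, Definition~\ref{def:spectral-soft-bellman} gives $F_{\mathcal Z_{k+1}}=\mathcal T^{\pi}_{\mathrm{cdf}}F_{\mathcal Z_k}$. Corollary~\ref{cor:soft-iterate-convergence} then yields the CDF-level bound, and the definition \eqref{eq:spectral-induced-metric} transports it verbatim. That route needs Assumption~\ref{assump:soft-xcdf-domain}, whereas the statement assumes only Assumption~\ref{ass:soft-policy-admissibility}, so I prefer the direct route.

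The one point needing care, which I expect to be the main obstacle, is identifying $\nu^\pi$. Under Assumption~\ref{ass:soft-policy-admissibility} alone, $\nu^\pi$ should be read as the Banach fixed point $\mathbb V F^\star$ from Proposition~\ref{prop:soft-fixed-point}. It coincides with $\mathbb V F_{\mathcal Z^\pi}$ once Assumption~\ref{assump:soft-xcdf-domain} holds, and the contraction argument above uses only the fixed-point property, so it is unaffected by which description is used.
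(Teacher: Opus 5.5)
Your direct route is exactly the paper's proof: apply Proposition~\ref{prop:spectral-soft-contraction} to the pair $\nu_k,\nu^\pi$, use $\mathcal T^{\pi}\nu^\pi=\nu^\pi$ and $\nu_{k+1}=\mathcal T^{\pi}\nu_k$, and iterate. Your extra remarks are correct and fill in bookkeeping the paper leaves implicit: $\widetilde d_{\mathrm{Cr}}(\nu_0,\nu^\pi)$ is finite, and under Assumption~\ref{ass:soft-policy-admissibility} alone $\nu^\pi$ should be read as $\mathbb V F^\star$.
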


\begin{proof}
Applying Proposition~\ref{prop:spectral-soft-contraction} to \(\nu_k\) and \(\nu^\pi\) gives
\[
\widetilde d_{\mathrm{Cr}}
\bigl(
\mathcal T^{\pi}\nu_k,
\mathcal T^{\pi}\nu^\pi
\bigr)
\le
\sqrt{\gamma}\,\widetilde d_{\mathrm{Cr}}(\nu_k,\nu^\pi).
\]
Since \(\nu^\pi\) is a fixed point and \(\nu_{k+1}=\mathcal T^{\pi}\nu_k\), this becomes
\[
\widetilde d_{\mathrm{Cr}}(\nu_{k+1},\nu^\pi)
\le
\sqrt{\gamma}\,\widetilde d_{\mathrm{Cr}}(\nu_k,\nu^\pi).
\]
Iterating this inequality yields \eqref{eq:appendix-spectral-iterate-error-bound}. In particular,
\[
\widetilde d_{\mathrm{Cr}}(\nu_k,\nu^\pi)\to 0,
\]
so \(\nu_k\to \nu^\pi\) in \((\mathcal X^{\mathrm{adm}},\widetilde d_{\mathrm{Cr}})\).
\end{proof}







\end{document}